
\documentclass{article}

\usepackage{microtype}
\usepackage{graphicx}
\usepackage{subfigure}
\usepackage{booktabs,multirow} 

\usepackage{hyperref}



\usepackage[accepted]{icml2025}

\usepackage{amsmath}
\usepackage{amssymb}
\usepackage{mathtools}
\usepackage{amsthm}

\usepackage[capitalize,noabbrev]{cleveref}

\theoremstyle{plain}
\newtheorem{theorem}{Theorem}[section]
\newtheorem{proposition}[theorem]{Proposition}
\newtheorem{lemma}[theorem]{Lemma}

\theoremstyle{definition}

\theoremstyle{remark}

\usepackage[textsize=tiny]{todonotes}

\icmltitlerunning{Efficient Network Automatic Relevance Determination}

\begin{document}

\twocolumn[
\icmltitle{Efficient Network Automatic Relevance Determination}




\begin{icmlauthorlist}
\icmlauthor{Hongwei Zhang}{ai3,sds,sais}
\icmlauthor{Ziqi Ye}{cambridge,sais}
\icmlauthor{Xinyuan Wang}{psu,sais}
\icmlauthor{Xin Guo}{sais}
\icmlauthor{Zenglin Xu}{ai3,sais}
\icmlauthor{Yuan Cheng}{ai3,sais}
\icmlauthor{Zixin Hu}{ai3,sais}
\icmlauthor{Yuan Qi}{ai3,zhongshan,sais}

\end{icmlauthorlist}

\icmlaffiliation{sds}{School of Data Science, Fudan University, Shanghai, China}
\icmlaffiliation{ai3}{Artificial Intelligence Innovation and Incubation Institute, Fudan University, Shanghai, China}
\icmlaffiliation{sais}{Shanghai Academy of Artificial Intelligence for Science, Shanghai, China}

\icmlaffiliation{psu}{Eberly College of Science, Pennsylvania State University, PA, United State}
\icmlaffiliation{cambridge}{Department of Applied Mathematics and Theoretical Physics, University of Cambridge, Cambridge, United Kingdom}
\icmlaffiliation{zhongshan}{Zhongshan Hospital, Fudan University, Shanghai, China}

\icmlcorrespondingauthor{Zixin Hu}{huzixin@fudan.edu.cn}
\icmlcorrespondingauthor{Yuan Qi}{qiyuan@fudan.edu.cn}

\icmlkeywords{Automatic Relevance Determination, Graphical Lasso}

\vskip 0.3in
]



\printAffiliationsAndNotice{}  

\begin{abstract}
We propose Network Automatic Relevance Determination (NARD), an extension of ARD for linearly probabilistic models, to simultaneously model sparse relationships between inputs $X \in \mathbb R^{d \times N}$ and outputs $Y \in \mathbb R^{m \times N}$, while capturing the correlation structure among the $Y$. NARD employs a matrix normal prior which contains a sparsity-inducing parameter to identify and discard irrelevant features, thereby promoting sparsity in the model. Algorithmically, it iteratively updates both the precision matrix and the relationship between $Y$ and the refined inputs. To mitigate the computational inefficiencies of the $\mathcal O(m^3 + d^3)$ cost per iteration, we introduce Sequential NARD, which evaluates features sequentially, and a Surrogate Function Method, leveraging an efficient approximation of the marginal likelihood and simplifying the calculation of determinant and inverse of an intermediate matrix. Combining the Sequential update with the Surrogate Function method further reduces computational costs. The computational complexity per iteration for these three methods is reduced to $\mathcal O(m^3+p^3)$, $\mathcal O(m^3 + d^2)$, $\mathcal O(m^3+p^2)$, respectively, where $p \ll d$ is the final number of features in the model. Our methods demonstrate significant improvements in computational efficiency with comparable performance on both synthetic and real-world datasets.
\end{abstract}

\section{INTRODUCTION}
Multiple-input Multiple-output Regression is a powerful modeling framework widely applied in quantitative disciplines such as finance and genomics \cite{zellner1962efficient,breiman1997predicting,hastie2009elements,wang2010sparse,he2016novel}. 
In biological research, this approach is often used to explore how molecular-level features (micro-phenotypes) influence broader phenotypic traits (macro-phenotypes) \cite{stephens2013unified,akbani2014pan}. A typical example involves examining how gene expression levels or protein concentrations, influence disease states or developmental outcomes. However, biological data often involve ultra-high-dimensional features, with thousands of genes or proteins contributing to a limited number of observable samples \cite{moon2019visualizing}. Such high-dimensional settings present significant computational challenges.

Empirical studies suggest that only a subset of specific molecular features significantly impacts the observed phenotypes. This highlights the need for sparse regression models that focus on the most relevant features, reducing over-fitting and enhancing interpretability. Furthermore, macro-phenotypes often exhibit sparse network structures, where only a few phenotypes are connected via interaction relationship. This sparse structure highlights the need for approaches that capture not only the individual effects of inputs but also the inter-dependencies among multiple outputs. Emerging methodologies for multi-omics integration and cross-modal network information processing capitalize on this principle, reflecting the fact that biological mechanisms arise from the complex interplay of numerous molecular events and their interactions \cite{cohen2022complex,kristensen2014principles}.

Another related example is the study of expression quantitative trait loci (eQTL). Some genetic variants can affect the expression of multiple genes, acting as potential confounders in gene networks. Gene expression data alone are unable to fully capture the gene activities. Ignoring these effects can result in spurious associations, leading to false positives or false negatives. Incorporating covariates such as genetic variants from eQTL studies improves the estimation of relationships among genes at the transcriptional level.

In this paper, we focus on linearly probabilistic models \cite{minka2000bayesian}. Drawing inspiration from \textit{Automatic Relevance Determination} (ARD) framework \cite{mackay1992bayesian,mackay1994bayesian}, we introduce an extension named \emph{Network ARD} (NARD) for multiple output regression. Specifically, we place an ARD prior on the regression coefficient matrix, enabling us to determine which input features are relevant for predicting the outputs. Simultaneously, we apply an $L_1$ penalty on the precision matrix to encourage sparsity, thereby modeling the dependencies among the output. Although ARD prior is effective for feature selection, it faces computational challenges in high-dimensional settings. Standard ARD methods optimized via Expectation-Maximization (EM) or type-II maximum likelihood incur an $\mathcal O(d^3)$ computational cost due to matrix inversion, where $d$ is the number of features. 

To address this issue, we design several novel algorithms within the NARD framework. Specifically, inspired by Tipping’s greedy approach \cite{tipping2003fast}, we develop the sequential update method, which sequentially adds and removes features. This approach allows the model to start with a few features, enabling decisions about each new feature based on its contribution to the overall evidence.
Additionally, we introduce a surrogate function that approximates the lower bound of the log marginal likelihood, avoiding the need for matrix inversion.
By integrating these two approaches, we provide a more efficient implementation of NARD, making it scalable for high-dimensional datasets.

In summary, we present the NARD framework, which jointly estimates the sparse regression coefficients and the precision matrix. To improve computational efficiency, we propose three novel extensions: Sequential NARD, Surrogate NARD and Hybrid NARD, which reduce complexity to $\mathcal{O}(m^3+p^3)$, $\mathcal{O}(m^3+d^2)$ and $\mathcal{O}(m^3+p^2)$, respectively. These approaches achieve significant improvements in computational efficiency while maintaining comparable predictive performance in synthetic and real-world datasets.

\subsection{Problem formulation}
Under the framework of linearly probabilistic models, given an input $x\in \mathbb R^{d}$ and an output $y\in \mathbb R^{m}$, we consider
\begin{equation}
    y = Wx + \epsilon,
\end{equation}
where $W\in \mathbb R^{m\times d}$ is the regression coefficient matrix and $\epsilon \sim \mathcal{N}(0, V)$ represents the error term, assumed to follow a normal distribution with mean zero and covariance matrix $V$. Given \( N \) sample pairs, the model extends to:
\begin{equation}
    Y_{m\times N} = W_{m \times d} X_{d \times N}+ \mathcal{E}_{m\times N},
\end{equation}
where the $i$-th column of $\mathcal{E}$ is $\epsilon$. 

Our goal is to jointly estimate the regression coefficient $W$ and the precision matrix $\Omega=V^{-1}$. A typical approach is the maximum likelihood estimator (MLE).
The negative log-likelihood function of $Y$, up to a constant, is given by
\begin{equation}
    l(W, \Omega) = \textbf{Tr}[(Y-WX)^{\top}(Y-WX) \Omega] -N\log |\Omega|.
\end{equation}
It is noticed that $l(W, \Omega)$ is not jointly convex in $W$ and $\Omega$ , but is bi-convex, i.e., it is convex in $W$ for fixed $\Omega$ and in $\Omega$ for fixed $W$. A common approach to address this bi-convexity is to employ an alternating minimization strategy, also known as the block coordinate descent (BCD) method \cite{tseng2001bcd_convergence}. This technique iteratively updates the parameters by fixing one set while optimizing the other until convergence is reached.
\subsection{Related work}
\textbf{Joint mean–covariance estimation.} The problem of joint multivariate variable and covariance selection has garnered significant attention in recent years. \citet{rothman2010sparse} proposed MRCE, which incorporates an $l_1$ penalty to promote sparsity in both $W$ and $\Omega$. This formulation leads to a BCD algorithm for solving the problem. Formally, the penalized log-likelihood is 
\begin{equation}
\small
    \left(\hat{W}, \hat{\Omega}\right)=\underset{(W, \Omega)}{\operatorname{argmin}}\left\{l(W, \Omega)+\lambda_1 \sum_{k \neq \ell}\left|\omega_{k \ell}\right|+\lambda_2 \sum_{j=1}^{md}\left|w_j\right|\right\},
\end{equation}
where $\omega_{k \ell}$ are the elements of $\Omega$ , $w_j$ are the elements of vectorized $W$ , $\lambda_1, \lambda_2> 0$ are tuning parameters. This formulation yields an alternative lasso \cite{tibshirani1996regression} and a graphical lasso problem. 
Similar approaches have been explored in previous studies, such as \cite{cai2013covariate,chen2016asymptotically,lin2016penalized,zhang2010learning,zhao2020efficient}. 
Another approach is to adopt the Bayesian framework, where a hierarchical model is constructed using appropriate priors, followed by MCMC sampling for parameter updates and probabilistic uncertainty quantification, as seen in \cite{bhadra2013joint,deshpande2019simultaneous,li2021joint,ha2021bayesian,samanta2022generalized}. The Bayesian methods provide a principled way to handle model uncertainty while promoting sparsity. Compared to penalized methods, MCMC-based techniques tend to be computationally expensive, making them less efficient for large-scale problems.

\textbf{ARD and SBL.} Automatic Relevance Determination (ARD), closely related to Sparse Bayesian Learning (SBL) \cite{tipping2001sparse,faul2001analysis,wipf2004sparse}, is a framework designed to identify and discard irrelevant features from high-dimensional data. ARD leverages a parameterized, data-driven prior to promote sparsity, mitigating the ill-posed nature of problems. 

Subsequent studies have theoretically extended ARD or SBL by establishing connections with iterative reweighted $l_1$ minimization \cite{wipf2007newview,wipf2010iterative}, compressive sensing \cite{babacan2009bayesian}, stepwise regression \cite{ament2021sparse}, or by developing new efficient iterative algorithms \cite{al2017gamp,zhou2021efficient,wang2024iterative_min_min}.

\textbf{Graphical Lasso.} The Graphical Lasso (GLasso) aims to estimate a sparse precision matrix \( \Theta \) for a given dataset \cite{friedman2008sparse,banerjee2008model,yuan2007model}. The general form of GLasso is defined as:
\begin{equation}
\label{def_eq:glasso}
    \hat{\Omega} = \arg \min_{\Omega} \left( -\log |\Omega| + \textbf{Tr}(\tilde{V}\Omega) + \psi_{\lambda}(\Omega) \right),
\end{equation}
where \( \hat{\Omega} \) is the estimated precision matrix, \( \Tilde{V} \) is the {empirical covariance matrix} and $\psi_{\lambda}(\Omega)$ is the regularization term controlling the sparsity of the solution with strength parameter $\lambda$. When $\psi_{\lambda}(\Omega) = \lambda \sum_{\substack{i \neq j}} |\omega_{ij}|
$
, Eq.~\eqref{def_eq:glasso} represents the original GLasso \cite{friedman2008sparse}. Other representative choices for $\psi_{\lambda}(\Omega)$ include the adaptive LASSO or SCAD penalty \cite{fan2009network} and graphical horseshoe \cite{li2019graphicalhorseshoe} in the Bayesian framework.

\section{NETWORK AUTOMATIC RELEVANCE DETERMINATION}
\subsection{Framework of NARD}
In this paper, we impose ARD prior on the regression coefficient matrix and $L_1$ penalty on the precision matrix to encourage sparsity. Actually, there are other potential penalties that can be applied to the precision matrix, offering flexibility in model specification. These alternatives can be considered as plug-in options.

Specifically, we impose the matrix normal distribution as the prior of $W$. The probability density function is given by
\begin{equation}
\small
\label{eq:w_prior}
\begin{split}
    W &\sim \mathcal{MN}(0, V_{m\times m}, K^{-1}_{d\times d}) \\
        &=\frac{|K|^{m/2}}{(2\pi)^{md/2} |V|^{d/2}} \times \exp \left[-\frac{1}{2} \textbf{Tr} (V^{-1} W K W^{\top})\right].
    \end{split}
\end{equation}
where $V$ and $K^{-1}$ are two positive definite matrices representing the covariance matrices for rows and columns of $W$ respectively. In this paper, we restrict $K=\text{diag}(\alpha_1, \cdots, \alpha_d)$ to be a diagonal matrix.

The basic idea of ARD is to give the $W_{ij}$ independent parameterized data-dependent priors. The hyperparameters $V$ and $K$ are trained from the data by maximizing the evidence $P(Y|X, V, K)$, which can be done via type-II maximum likelihood or Expectation–Maximization (EM) algorithm.
\begin{equation}
    \label{eq:mlf_raw}
        p(Y,V,K|X) = \int p(Y|W,X) \, p(W|V,K) \, p(V) p(K)dw .
\end{equation}
Here $p(V)$ and $p(K)$ are hyperpriors imposed on $V$ and $K$, which will be specified later.
To estimate $V,K,W$, an essential procedure is to maximize the marginal likelihood function (MLF) in Eq.~\eqref{eq:mlf_raw}. This is equivalent to minimize the negative logarithm of the MLF.

Recall {$l(W, \Omega)$} is not jointly convex in $W$ and $\Omega$, we employ the BCD method to iteratively update the parameters. This involves alternatively solving the sparse covariance estimation problem and performing Bayesian linear regression with the ARD prior, cycling through the parameters until convergence is reached. 

\subsection{Evidence approximation}
Referring to the notation in \cite{minka2000bayesian}, we define
\begin{equation}
    \begin{aligned}
         S_{xx} &= XX^{\top} + K, \\
    S_{yy} &= YY^{\top}, \\
    S_{yx} &= YX^{\top}, \\
    S_{y|x} &= S_{yy} - S_{yx} S_{xx}^{-1} S_{yx}^{\top}.
    \end{aligned}
\end{equation}
By multiplying the likelihood function $p(Y|W,X)$ with the conjugate prior Eq.~\eqref{eq:w_prior} of $W$, {i.e., $p(W|V,K)$,} we have: 
\begin{equation}
\label{eq:joint}
\begin{aligned}
& \ln p(Y,W|X,V,K) \\
\propto &  \textbf{Tr}\left[V^{-1} \left(WS_{xx}W^\top - 2S_{yx}W^\top + S_{yy}\right)\right]\\
= &  \textbf{Tr}\Big[V^{-1} \left(W-S_{yx}S_{xx}^{-1}\right)S_{xx}\left(W-S_{yx}S_{xx}^{-1}\right)^{\top} \\
&\quad+V^{-1}S_{y|x}\Big].
\end{aligned}
\end{equation}
As a result, the posterior distribution for the coefficient matrix $W$ remains a matrix normal distribution and the MLF for $Y$ can then be given by integrating out $W$ from the joint distribution given in Eq.~\eqref{eq:joint}. Formally, we have:
\begin{equation}
\begin{aligned}
\label{eq:dist_w_y}
     p(W|X,Y,V,K) &= \mathcal{MN}\left(\mu, V, \Sigma\right), \\
     p(Y|X,V,K) &= \mathcal{MN}(0, V, C),
\end{aligned}
\end{equation}
where $\mu=S_{yx}S_{xx} ^{-1}$ and $\Sigma=S_{xx}$ are the posterior mean and column covariance of $W$, respectively. $C= I+X^{\top}K^{-1}X$ is the columns covariance of $Y$.

The negative logarithm of the MLF takes the form:
\begin{equation}
\label{formula:ln_p}
\begin{aligned}
    \mathcal L &= -\ln p(Y|X,V,K) \\
    &\propto  m \ln |C| + N \ln |V| + \textbf{Tr}(Y^{\top} V^{-1} Y C^{-1}).
\end{aligned}
\end{equation}
In practice we use Woodbury identity to calculate $ C^{-1}$: 
\begin{equation}
\quad  C^{-1} = (I+X^{\top}K^{-1}X)^{-1} = I-X^{\top}S_{xx}^{-1}X.
\end{equation}
Note that the term $Y C^{-1}Y^{\top}$ in Eq.~\eqref{formula:ln_p} can be re-expressed by introducing the latent variable $W$ as follows:
\begin{equation}
\begin{aligned} 
\label{eq:reformulate}
    Y C^{-1}Y^{\top} &= (Y-\mu X)(Y-\mu X)^{\top}+\mu K \mu^{\top} \\
    &= \min_{W} (Y-W X)(Y-W X)^{\top}+W K W^{\top},
\end{aligned}
\end{equation}
i.e., $\ln p(Y|X,V,K) = \ln p(Y|X,V,K,W)$.

Then we use graphical lasso to update $V$, i.e., 
\begin{equation}
    \hat{V},\hat{V}^{-1}  \leftarrow \textbf{glasso}(V, \lambda),
\end{equation}
where \textbf{glasso} refers to GLasso procedure, which takes the empirical covariance as input and returns updated covariance $\hat{V}$ and precision matrix $\hat{V}^{-1}$. The parameter $\lambda$ controls the sparsity of the $\hat{V}^{-1}$ via the $L_1$ penalty.

\subsection{Algorithm summary}
We can incorporate hyperpriors for the parameters $V$ and $K$ in our model. The hierarchical Bayesian approach allows for greater flexibility and robustness by capturing uncertainties in the prior distributions. We have $p(W;V)=\int p(W|K;V)p(K)d\alpha$. If each $\alpha_i$ follows a common Gamma distribution, then the prior of $W_i$ is a multidimensional Student distribution.

When a specific prior is adopted, $\alpha$ and $V$ reach their minima at the points where the gradient of the logarithm of the MLF is zero.
It can be shown that 
\begin{equation}
\label{eq:ma_V}
    V^{\text{new}} = \frac{(Y - \mu X)(Y - \mu X)^{\top} + \mu K \mu^{\top}}{N}.
\end{equation}
Here we consider two different priors for $\alpha$. 

\textbf{Flat prior.} If $\alpha$ follows the flat prior, i.e., $p(\alpha)=1$ then
\begin{equation}
\label{eq:ma_alpha}
    \alpha_i^{\text{new}} = \frac{m}{m(S_{xx}^{-1})_{ii} + (\mu ^\top V^{-1} \mu)_{ii}}.
\end{equation}

\textbf{Gamma prior.} If $\alpha$ follows a Gamma prior $\text{Gamma}(a, b)$: 
\begin{equation}
    p(\alpha) = \prod_{i=1}^{d} \frac{b^{a}}{\Gamma(a)} \alpha_i^{a- 1} e^{-b \alpha_i}.
\end{equation}
where $a$ and $b$ are the shape and rate parameters of the Gamma distribution, respectively.
Then
\begin{equation}
\label{eq:ma_alpha_g}
    \alpha_i^{\text{new}} = \frac{m + 2a-2}{m(S_{xx}^{-1})_{ii} + (\mu ^\top V^{-1} \mu)_{ii} + 2b}.
\end{equation}

\renewcommand{\algorithmicrequire}{\textbf{Input:}}
\renewcommand{\algorithmicensure}{\textbf{Output:}}

\begin{algorithm}[h]
    \caption{NARD}
    \label{alg:nard}
    \begin{algorithmic}[1]
        \REQUIRE Input data $X$, $Y$, $\epsilon$
        \ENSURE Estimated $\alpha$, $V$, $V^{-1}$, $W$
        
        \STATE Initialize $\alpha$ elements, $V \leftarrow \hat{V}_{\text{MLE}}$.
        \FOR{$t \gets 1$ \textbf{to} $T$}
            \STATE Compute $\Sigma \leftarrow (K + XX^{\top})^{-1}$.
            \STATE Compute $\mu \leftarrow YX^{\top} \Sigma$.
            \STATE Update $V$ according to (\ref{eq:ma_V}).
            \STATE $V, V^{-1} \leftarrow \textbf{glasso}(V, \lambda)$.
            \STATE Update $\alpha$ according to (\ref{eq:ma_alpha}), (\ref{eq:ma_alpha_g}) depending on whether the Gamma prior is included.
            \IF{$\max |\Delta(\frac{1}{\alpha})| \leq \epsilon$}
                \STATE \textbf{Break}.
            \ENDIF
            \STATE Update $W \leftarrow \mu$.

        \ENDFOR
        
    \end{algorithmic}
\end{algorithm}

Note that if any $\alpha_i=0$, then $W_{\cdot i}=0$ and the corresponding feature is effectively pruned from the model. In each iteration, we calculate $\Sigma =S_{xx}^{-1}=(K + XX^{\top})^{-1}$, $\mu = YX^{\top}\Sigma$ and update $\alpha$, $V$ as the above formula. We repeat the process above until the number of iterations reaches the maximum or $\max|\Delta(\frac{1}{\alpha}, t)| := \max|\frac{1}{\alpha^{(t)}} - \frac{1}{\alpha^{(t-1)}}|$ is significantly small, where $t$ is the iteration number. Algorithm \ref{alg:nard} summarizes the proposed NARD.

\textbf{Prior for $W$.} A conjugate prior for $V$ is the inverse Wishart distribution. This will lead to a matrix-$\mathcal{T}$ distribution \cite{gupta2018matrix} for the MLF of $Y$. Although this will not affect the main conclusions and derivations of the paper, it may complicate the expressions for some variables. Due to space limitations, we focus solely on the prior for $K$. However, the technical conclusions of this paper remain applicable to the prior for $V$. A detailed discussion of the choice of hyperprior is provided in Appendix \ref{appendix:hyperprior}.

\subsection{Extension to the nonlinear setting}
NARD can be naturally extended to address nonlinearity through kernel method. Specifically, we consider  
$$
Y = W\Phi(X) + \mathcal{E}, \quad \Phi(\cdot) \in { \text{Polynomial, RBF, ...} }
$$
where $\Phi(X)$ represents a nonlinear feature mapping that transforms the input space into a higher-dimensional space, enabling more flexible modeling of complex relationships.

\section{SEQUENTIAL NARD}

\subsection{Sequential update}
Rather than pruning redundant or irrelevant features as in NARD, we employ a greedy approach that sequentially adds and removes features. The key difference is that the original NARD requires $\mathcal{O}(d^3)$ computations at the beginning of training, whereas the sequential update method begins with an almost empty model, consisting of only a few features, which significantly reduces the initial computational burden. We adopt a fast sequential optimization method to efficiently update the hyperparameters inspired by \cite{faul2001analysis,tipping2003fast,ament2021sparse}.
\subsection{Fast optimization of evidence}

To perform a sequential update on $V$ and $K$, we separate out the contribution of a single prior parameter $\alpha_i$ from the MLF $ P(Y|X,V,K)$. 

Thus, by rewriting $C$ as $C=C_{\backslash i} + \alpha_i^{-1}\varphi_i \varphi_i^{\top}$, and using determinant and inverse lemma of matrix, the logarithm of MLF can be explicitly decomposed into two parts, one part denoted by $\mathcal{L}(\alpha_{\backslash i})$, that does not depend on $\alpha_i$ and another that does, i.e., 
\begin{equation}
\begin{aligned}
     \mathcal{L}(\alpha) &= m \left[\ln \alpha_i-\ln (\alpha_i+s_i)\right] + \frac{\textbf{Tr}(q_i q_i^{\top}V^{-1})}{\alpha_i+s_i}  \\
      &\quad+ \mathcal{L}(\alpha_{\backslash i}),
\end{aligned}
\end{equation}
where $\varphi_i \in \mathbb R^{N}$ denotes the $i$-th row of $X$, $q_i =YC_{\backslash i}^{-1}\varphi_i$ and $s_i = {\varphi}_i^{\top} {C}_{\backslash i}^{-1} {\varphi}_i$.

\begin{theorem}
\label{theorem:global}
    Denote $\eta_i := \textbf{Tr}(q_i q_i^{\top}V^{-1}) - m s_i$, then the global maximum of $\mathcal{L}(\alpha)$ with respect to $\alpha_i$ is
    \begin{equation}
    \label{update_ai}
    \alpha_i =
    \begin{cases} 
        \frac{m s_i^2}{\eta_i} \quad, &  \ \eta_i > 0; \\
         \infty \quad, & \ \eta_i \le 0.
    \end{cases}
    \end{equation}
\end{theorem}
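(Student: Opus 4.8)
The plan is to reduce the claim to a one-dimensional optimization. Every term of $\mathcal{L}(\alpha)$ except $m[\ln\alpha_i - \ln(\alpha_i+s_i)] + \textbf{Tr}(q_iq_i^{\top}V^{-1})/(\alpha_i+s_i)$ is absorbed into $\mathcal{L}(\alpha_{\backslash i})$, which does not depend on $\alpha_i$, so it suffices to maximize
\[
\ell(\alpha_i) \;:=\; m\ln\alpha_i - m\ln(\alpha_i+s_i) + \frac{t_i}{\alpha_i+s_i}, \qquad t_i := \textbf{Tr}(q_iq_i^{\top}V^{-1}),
\]
over $\alpha_i\in(0,\infty)$, and then to compare the resulting value with the limit as $\alpha_i\to\infty$. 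Two elementary facts will be used throughout: $t_i = q_i^{\top}V^{-1}q_i \ge 0$ because $V^{-1}\succ 0$, and $s_i = \varphi_i^{\top}C_{\backslash i}^{-1}\varphi_i > 0$ because $C_{\backslash i}\succ 0$ and $\varphi_i\neq 0$ (a zero row is a trivial feature and can be excluded).

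First I would record the behavior at the ends of the interval: as $\alpha_i\to 0^{+}$ the term $m\ln\alpha_i\to-\infty$ dominates, so $\ell(\alpha_i)\to-\infty$; as $\alpha_i\to\infty$ both $m\ln\tfrac{\alpha_i}{\alpha_i+s_i}\to 0$ and $\tfrac{t_i}{\alpha_i+s_i}\to 0$, so $\ell(\alpha_i)\to 0$. Hence the value $0$ is exactly the supremum associated with "$\alpha_i=\infty$" (the state in which the $i$-th feature is removed), and the global maximum is attained either at an interior stationary point or in that limit. Next, differentiate:
\[
\ell'(\alpha_i) \;=\; \frac{m}{\alpha_i} - \frac{m}{\alpha_i+s_i} - \frac{t_i}{(\alpha_i+s_i)^2}.
\]
Multiplying by $\alpha_i(\alpha_i+s_i)^2>0$ (strictly positive on the domain, so this introduces no spurious roots and does not change the sign of $\ell'$), and using $m(\alpha_i+s_i)^2 - m\alpha_i(\alpha_i+s_i) = ms_i(\alpha_i+s_i)$, the stationarity condition collapses to the \emph{linear} equation $ms_i(\alpha_i+s_i) = t_i\alpha_i$, i.e. $(t_i - ms_i)\alpha_i = ms_i^2$, whose only solution is $\alpha_i = ms_i^2/(t_i - ms_i) = ms_i^2/\eta_i$.

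It then remains to decide which candidate is the global maximizer, and here I would split on the sign of $\eta_i$. If $\eta_i>0$, the point $ms_i^2/\eta_i$ lies in $(0,\infty)$ and is the unique interior critical point; since $\ell'(\alpha_i)\to+\infty$ as $\alpha_i\to 0^{+}$, $\ell'$ is positive just above $0$, and having exactly one zero it must switch from $+$ to $-$ there, so $ms_i^2/\eta_i$ is the global maximizer on $(0,\infty)$. Substituting (using $\alpha_i+s_i = s_i t_i/\eta_i$ at this point) gives $\ell(ms_i^2/\eta_i) = m(u - 1 - \ln u)$ with $u := t_i/(ms_i) > 1$, which is strictly positive, hence strictly larger than the limiting value $0$; this confirms it is the global maximum over $(0,\infty]$. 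If $\eta_i\le 0$, the linear equation has no solution in $(0,\infty)$ ($ms_i^2/\eta_i\le 0$ when $\eta_i<0$, and $ms_i^2=0$ is impossible when $\eta_i=0$), so $\ell$ has no interior critical point; combined with $\ell'>0$ near $0$ this forces $\ell$ to be strictly increasing on $(0,\infty)$ with supremum $0$ attained only as $\alpha_i\to\infty$, so the maximizer is $\alpha_i=\infty$. Together these two cases are exactly \eqref{update_ai}.

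The hard part will not be the derivative computation but the global-optimality argument: establishing that the single interior stationary point is a genuine global maximizer, and in the complementary regime that no interior optimum exists. This is precisely where the boundary values $\ell(0^{+})=-\infty$ and $\ell(\infty)=0$, the uniqueness of the root (inherited from the reduction to a linear equation), and the sign of $\ell'$ near the origin have to be combined; the degenerate case $\eta_i=0$ and the direction of the sign change of $\ell'$ each need a line of care. Everything else is mechanical.
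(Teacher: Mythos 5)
Your proposal is correct and follows essentially the same route as the paper: isolate the $\alpha_i$-dependent part of $\mathcal{L}$, set the derivative to zero to obtain the unique stationary point $\alpha_i = m s_i^2/\eta_i$, and split on the sign of $\eta_i$, with the $\eta_i \le 0$ case handled identically (the derivative's numerator $m s_i^2 - \eta_i\alpha_i$ stays positive, so the likelihood increases toward $\alpha_i \to \infty$). The only difference is in certifying the maximum when $\eta_i > 0$: the paper applies a second-derivative test at the stationary point, while you use the sign change of the derivative (via the linear reduction) together with the boundary values $\mathcal{L} \to -\infty$ as $\alpha_i \to 0^{+}$ and $\mathcal{L} \to 0$ as $\alpha_i \to \infty$ and the explicit critical value $m(u-1-\ln u) > 0$, which if anything makes the global (rather than merely local) optimality claim more airtight.
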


\begin{algorithm}[t]
    \caption{Sequential NARD}
    \label{alg:sequential_nard}
    \begin{algorithmic}[1]
        \REQUIRE Input data $X$, $Y$, $\epsilon$, $T$
        \ENSURE Estimated $\alpha$, $V$, $V^{-1}$

        \STATE Initialization: model with a few features, set $\alpha$ elements, $V \leftarrow \hat{V}_{\text{MLE}}$, $\Delta L(\alpha) \leftarrow \infty$, $t \leftarrow 0$.

        \WHILE{$\Delta L(\alpha) > \epsilon$ and $t < T$}
            \STATE $t \leftarrow t + 1$.
            \STATE \textbf{select} $i \in \{1, 2, \ldots, d\}$ randomly.
            \STATE Calculate temporary $Q_i$, $q_i$, $S_i$, $s_i$, $\eta_i$, $\alpha_i$.
            
            \IF{$\alpha_i < \infty$, i.e., $\varphi_i$ is in the model}
                \IF{$\eta_i > 0$}
                    \STATE $\alpha_i \leftarrow \frac{m s_i^2}{\eta_i}$.
                \ELSE
                    \STATE $\alpha_i \leftarrow \infty$, Delete $\varphi_i$ from $X$.
                \ENDIF
            \ELSE
                \IF{$\eta_i > 0$}
                    \STATE $\alpha_i \leftarrow \frac{m s_i^2}{\eta_i}$, Add $\varphi_i$ to $X$.
                \ELSE
                    \STATE Skip $\varphi_i$, continue.
                \ENDIF
            \ENDIF
            
            \STATE Calculate $K_{\mathcal{A}}$, $K_{\mathcal{A}}^{-1}$, $C_{\mathcal{A}}$, $C_{\mathcal{A}}^{-1}$.
            \STATE $\hat{V}_{\text{MLE}} \leftarrow \frac{Y^{\top} C_{\mathcal{A}}^{-1} Y}{N}$.
            \STATE $V, V^{-1} \leftarrow \textbf{glasso}(\hat{V}_{\text{MLE}}, \lambda)$.
            \STATE Calculate $L$, $\Delta L(\alpha)_\text{new}$.
            
            \IF{$\Delta L(\alpha)_\text{new} > 0$}
                \STATE Update
                $X, Q, q, S, s, \eta, \alpha, L, K_{\mathcal{A}}, C_{\mathcal{A}}, V, V^{-1}$.
            \ELSE
                \STATE Undo all changes in this iteration.
                \STATE Continue
            \ENDIF
        \ENDWHILE
    \end{algorithmic}
\end{algorithm}
The detailed proof of Theorem \ref{theorem:global} can be found in  Appendix \ref{appendix:proof}.
Furthermore, to simplify the maintenance and update of $s_i$ and $q_i$, we can exploit the following relations

\begin{equation}
\label{eq:si_qi}
q_i =\frac{\alpha_i Q_i}{\alpha_i-S_i}, \quad
s_i =\frac{\alpha_i S_i}{\alpha_i-S_i},
\end{equation}
where we define
\begin{equation}
\label{eq:SI_QI}
Q_i :=YC_{\mathcal A}^{-1}\varphi_i, \quad 
S_i :={\varphi}_i^{\top} {C_{\mathcal A}}^{-1} {\varphi}_i.
\end{equation}
Here $\mathcal{A}$ is the active subset of features, with
\begin{equation}
\label{eq:C_sub_a}
     C_{\mathcal A}^{-1} := (I+X_{\mathcal A}^{\top}K_{\mathcal A}^{-1}X_{\mathcal A})^{-1}.
\end{equation}
$X_{\mathcal A}$ and $K_{\mathcal A}$ are the sub-matrices of $X$ and $K$ corresponding to $A$. We denote $S_{xx}^{\mathcal{A}} := X_{\mathcal A}X_{\mathcal A}^{\top}+K_{\mathcal A}$. Using Woodbury identity, we can write
\begin{equation}
\label{eq:qi_si}
\begin{aligned}
    Q_i &=Y\varphi_i - YX_{\mathcal A}^{\top}(S_{xx}^{\mathcal{A}})^{-1}X_{\mathcal A}\varphi_i , \\
S_i &={\varphi}_i^{\top}\varphi_i - {\varphi}_i^{\top}X_{\mathcal A}^{\top}(S_{xx}^{\mathcal{A}})^{-1}X_{\mathcal A}{\varphi}_i.
\end{aligned}
\end{equation}

It can be shown that the computation in Eq.~\eqref{eq:qi_si} involves only those features in the active set $\mathcal{A}$ that correspond to finite hyperparameters $\alpha_i$. Let $p \ll d$ denote the final number of features in the model, the computational complexity per iteration is $\mathcal{O}(p^3)$.

It is easy to verify that when $\alpha_i \to \infty$, both the prior and posterior of $W$ exhibit a very high density around $W_i = \vec{0}$. This indicates that when $\alpha_i \to \infty$,  we can remove the $i$-th feature of $X$. See details in Appendix \ref{appendix:analysis_of_prior}.
\subsection{Algorithm summary}
Algorithm \ref{alg:sequential_nard} summarizes the proposed Sequential NARD. We begin with a model containing only a few features, which means that the corresponding $\alpha$ elements are not $\infty$. Next, we randomly select $i \in \{1, 2, \dots, d\}$, calculate $\eta_i$ and $\alpha_i$, and update $V$ using graphical lasso. Based on the values of $\eta_i$ and $\alpha_i$, and whether the $i$-th feature is currently included in the model, we decide whether to add it ($\alpha_i \leftarrow \frac{m s_i^2}{\eta_i}$), re-estimate it ($\alpha_i \leftarrow \frac{m s_i^2}{\eta_i}$), or delete it ($\alpha_i \leftarrow \infty$). Afterward, we compute the new MLF, $\mathcal{L}(\alpha)_{\text{new}}$. If it increases, we retain the change; otherwise, we undo it. This process is repeated until $\mathcal{L}(\alpha)$ converges.

\section{SURROGATE NARD}
\subsection{Overview of surrogate function method}
An alternative approach to reducing computational complexity is to introduce a surrogate function. More specifically, we approximate $p(Y|X,V,K,W)$ with a surrogate function $\hat{p}(Y|X,V,K,W,W')$, satisfying
\begin{equation}\label{eq:tight_bound}
    \begin{aligned}
        p(Y|X,V,K,W) = \max_{W'} \hat{p}(Y|X,V,K,W,W'),
    \end{aligned}
\end{equation}
which establishes a tight lower bound on the original likelihood function. In other words, our objective becomes to maximize the lower bound of the MLF. In fact, this is a common technique in optimization and variational inference \cite{hunter2004tutorial,kingma14vae,sun2016majorization}. 
In particular, the most computationally expensive step of the original NARD is matrix inversion. To mitigate this, a lower bound is strategically chosen to approximate the matrices involved in the inversion as diagonal matrices. This approximation eliminates the bottleneck caused by matrix inversion, resulting in a substantial reduction in overall computational complexity. 
\subsection{Lower bound of the MLF}
\begin{lemma} \cite{boyd2004convex}
    Suppose $f(X)$ is a function $f: \mathbb R^{n\times n } \rightarrow \mathbb R$, the first-order Taylor approximation with trace as inner product is \begin{equation}
        f(X+V) \approx f(X) + \textbf{Tr}(\nabla f(X)^{\top}V).
    \end{equation}
\end{lemma}
\begin{lemma} \label{lemma:lipschitz_gradient}
    Let $f: \mathbb R^{n\times n } \rightarrow \mathbb R$ be a continuously differentiable function with Lipschitz continuous gradient and Lipschitz constant $L$. Then, for any $U, V \in \mathbb R^{n\times n }$, 
    \begin{equation}
        \vert f(U) - f(V) - \textbf{Tr}(\nabla f(V)^{\top} (U-V))\vert \le \frac{L}{2} \Vert U - V \Vert^2.
    \end{equation}
\end{lemma}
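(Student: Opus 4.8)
The plan is to prove the statement by reducing it to a one–dimensional argument along the line segment joining $V$ and $U$, which is the classical route to the so-called descent lemma. First I would introduce the auxiliary function $g:[0,1]\to\mathbb R$ defined by $g(t)=f(V+t(U-V))$. Since $f$ is continuously differentiable, $g$ is differentiable on $[0,1]$, and by the chain rule together with the identification of the differential via the trace inner product (the same convention as in the first-order Taylor lemma stated just above), we get $g'(t)=\textbf{Tr}\big(\nabla f(V+t(U-V))^{\top}(U-V)\big)$.

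Next I would apply the fundamental theorem of calculus, $f(U)-f(V)=g(1)-g(0)=\int_0^1 g'(t)\,dt$, and subtract the constant term written in integral form, $\textbf{Tr}(\nabla f(V)^{\top}(U-V))=\int_0^1 \textbf{Tr}(\nabla f(V)^{\top}(U-V))\,dt$. Combining the two integrands yields the exact identity
\begin{align*}
&f(U)-f(V)-\textbf{Tr}\big(\nabla f(V)^{\top}(U-V)\big) \\
&\quad=\int_0^1 \textbf{Tr}\Big[\big(\nabla f(V+t(U-V))-\nabla f(V)\big)^{\top}(U-V)\Big]\,dt,
\end{align*}
which already isolates the quantity to be bounded.

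Then I would estimate the right-hand side. Taking absolute values inside the integral and invoking the Cauchy–Schwarz inequality for the Frobenius inner product, $|\textbf{Tr}(A^{\top}B)|\le\Vert A\Vert\,\Vert B\Vert$, each integrand is at most $\Vert\nabla f(V+t(U-V))-\nabla f(V)\Vert\cdot\Vert U-V\Vert$. Using the $L$-Lipschitz continuity of $\nabla f$, the first factor is at most $L\,\Vert t(U-V)\Vert=Lt\,\Vert U-V\Vert$. Integrating $Lt\,\Vert U-V\Vert^2$ over $t\in[0,1]$ produces $\tfrac{L}{2}\Vert U-V\Vert^2$, which is exactly the claimed two-sided bound; keeping only the appropriate sign gives the quadratic majorizer that the surrogate-function construction will exploit.

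I do not expect a genuine obstacle here — the result is standard — so the ``hard part'' is merely bookkeeping: making sure the chain rule is applied with the trace-as-inner-product convention so that $g'(t)$ has the stated form, and keeping the matrix norm consistent throughout (the Frobenius norm in the Lipschitz hypothesis, in the Cauchy–Schwarz step, and in the conclusion). If a non-Frobenius norm were intended in the statement, the only change needed is to absorb the relevant norm-equivalence constant into $L$.
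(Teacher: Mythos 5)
Your proof is correct: the reduction to $g(t)=f(V+t(U-V))$, the fundamental theorem of calculus, the Frobenius Cauchy--Schwarz bound $\vert \textbf{Tr}(A^{\top}B)\vert \le \Vert A\Vert\,\Vert B\Vert$, and the Lipschitz estimate integrated against $t$ give exactly the claimed $\tfrac{L}{2}\Vert U-V\Vert^2$ bound, and your remark about keeping the Frobenius norm consistent (or absorbing an equivalence constant into $L$) is the right caveat. The paper itself does not prove this lemma --- it is stated as a standard descent-lemma fact --- so your argument simply supplies the classical proof that the paper implicitly relies on; note that in the paper's application the constant $L=2\rho$ with $\rho$ the largest eigenvalue of $XX^{\top}$ is indeed a Frobenius-norm Lipschitz constant for $\nabla g$, so your proof covers that use without modification.
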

Denote
\begin{equation}
    \begin{aligned}
        R(W,W')&=(Y-W'X)(Y-W'X)^{\top} \\
        &+ 2(W-W')X(W'X-Y)^{\top} \\
        &+\rho(W-W')(W-W')^{\top},
    \end{aligned}
\end{equation}
where \( \rho \in \mathbb R \) denotes the largest eigenvalue of $XX^{\top}$,
we have the following lemma:
\begin{lemma}
    Let $g(W) = (Y-WX)(Y-WX)^{\top}$, then $\textbf{Tr}(g(W))  \le \textbf{Tr}(R(W, W'))$.
\end{lemma}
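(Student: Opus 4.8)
The plan is to prove the inequality by a direct expansion of $\textbf{Tr}(g(W))$ around $W'$, after which every term will either match a term of $\textbf{Tr}(R(W,W'))$ verbatim or will be controlled by the eigenvalue bound built into the definition of $\rho$. Set $\Delta := W - W'$ and $E' := Y - W'X$, so that $Y - WX = E' - \Delta X$. Expanding,
\begin{equation}
g(W) = (E'-\Delta X)(E'-\Delta X)^{\top} = E'E'^{\top} - \Delta X E'^{\top} - E' X^{\top}\Delta^{\top} + \Delta XX^{\top}\Delta^{\top},
\end{equation}
and taking the trace, using its cyclicity and invariance under transposition, gives
\begin{equation}
\textbf{Tr}(g(W)) = \textbf{Tr}(E'E'^{\top}) + 2\,\textbf{Tr}\!\big(\Delta X (W'X - Y)^{\top}\big) + \textbf{Tr}(\Delta XX^{\top}\Delta^{\top}).
\end{equation}
Since $E' = Y - W'X$ and $\Delta = W - W'$, the first two summands are precisely the traces of the first two summands in the definition of $R(W,W')$, so the whole statement reduces to bounding the last term.

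The crux is then the single estimate $\textbf{Tr}(\Delta XX^{\top}\Delta^{\top}) \le \rho\,\textbf{Tr}(\Delta\Delta^{\top})$. Writing $\Delta \in \mathbb{R}^{m\times d}$ in terms of its rows $\delta_1^{\top},\dots,\delta_m^{\top}$ with each $\delta_k \in \mathbb{R}^{d}$, one has $\textbf{Tr}(\Delta XX^{\top}\Delta^{\top}) = \sum_{k=1}^{m}\delta_k^{\top}(XX^{\top})\delta_k$, and since $XX^{\top}$ is symmetric positive semidefinite with largest eigenvalue $\rho$, the Rayleigh-quotient bound gives $\delta_k^{\top}(XX^{\top})\delta_k \le \rho\,\|\delta_k\|_2^{2}$ for each $k$; summing over $k$ yields $\textbf{Tr}(\Delta XX^{\top}\Delta^{\top}) \le \rho\sum_k\|\delta_k\|_2^{2} = \rho\,\textbf{Tr}(\Delta\Delta^{\top})$. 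Substituting this into the displayed expansion immediately gives $\textbf{Tr}(g(W)) \le \textbf{Tr}(R(W,W'))$, with equality exactly when every row of $W - W'$ lies in the top eigenspace of $XX^{\top}$ (in particular when $W = W'$), which is the tightness that will matter for the surrogate construction in Eq.~\eqref{eq:tight_bound}.

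As an alternative route that fits the lemmas already stated, one may instead apply Lemma \ref{lemma:lipschitz_gradient} to the quadratic functional $\phi(W) := \textbf{Tr}(g(W)) = \|Y - WX\|_F^{2}$, whose gradient $\nabla\phi(W) = 2(WX - Y)X^{\top}$ satisfies $\nabla\phi(U) - \nabla\phi(V) = 2(U-V)XX^{\top}$ and is therefore Lipschitz with constant $2\rho$; the descent inequality then reads $\phi(W) \le \phi(W') + \textbf{Tr}\!\big(\nabla\phi(W')^{\top}(W-W')\big) + \rho\,\|W - W'\|_F^{2}$, and a one-line trace manipulation identifies $\textbf{Tr}(\nabla\phi(W')^{\top}(W-W')) = 2\,\textbf{Tr}\!\big((W-W')X(W'X-Y)^{\top}\big)$, recovering exactly $\textbf{Tr}(R(W,W'))$ on the right-hand side.

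I do not expect a genuine obstacle here: the argument is elementary, and the only places requiring care are the bookkeeping of transposes and dimensions in the trace identities and the invocation of the correct Rayleigh bound for $XX^{\top}$; the step I would flag as the substantive one is the eigenvalue estimate $\textbf{Tr}(\Delta XX^{\top}\Delta^{\top}) \le \rho\,\textbf{Tr}(\Delta\Delta^{\top})$, since it is what converts the exact quadratic remainder into the diagonal (in fact scalar) surrogate term that makes the later inversion-free updates possible.
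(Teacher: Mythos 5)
Your proof is correct, and your primary argument takes a genuinely more elementary route than the paper's. The paper proves the lemma by computing $\nabla g(W') = -2(Y-W'X)X^{\top}$, identifying the Lipschitz constant $L = 2\|XX^{\top}\| = 2\rho$, and then invoking the descent-type inequality of Lemma~\ref{lemma:lipschitz_gradient} to obtain $\textbf{Tr}[g(W)] \le \textbf{Tr}[g(W')] + \textbf{Tr}[\nabla g(W')^{\top}(W-W')] + \frac{L}{2}\textbf{Tr}[(W-W')(W-W')^{\top}]$, which is exactly $\textbf{Tr}(R(W,W'))$ — this is precisely your second, ``alternative'' route, including the identification of the Lipschitz constant and the trace bookkeeping. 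Your main argument instead expands $\textbf{Tr}(g(W))$ exactly in $\Delta = W-W'$, observes that the zeroth- and first-order terms coincide verbatim with the corresponding terms of $\textbf{Tr}(R(W,W'))$, and reduces everything to the single Rayleigh-quotient estimate $\textbf{Tr}(\Delta XX^{\top}\Delta^{\top}) \le \rho\,\textbf{Tr}(\Delta\Delta^{\top})$. This buys two things: it is self-contained (no appeal to a Lipschitz-gradient lemma, which in the paper is stated for square matrices and for a scalar-valued $f$, whereas $g$ is matrix-valued — a mild imprecision your expansion sidesteps), and it makes the exactness structure transparent, since the only inequality is the eigenvalue bound on the quadratic remainder; your equality characterization (rows of $W-W'$ in the top eigenspace of $XX^{\top}$, in particular $W=W'$) is what underwrites the tightness claimed in Eq.~\eqref{eq:tight_bound}. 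The paper's route, by contrast, packages the same quadratic bound as a standard majorization/descent lemma, which is slightly shorter and emphasizes the MM interpretation of the surrogate. Both are valid; no gaps.
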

\begin{proof}
We begin by analyzing the Lipschitz constant $L$ for the function $g(W)$.
First, we compute the gradient of \( g(W) \) with respect to \( W \):
    \begin{equation}
        \frac{\partial g(W)}{\partial W} = -2(Y-WX)X^{\top}.
    \end{equation}
According to the definition of Lipschitz continuous gradient, the following inequality holds:
\begin{equation}
\begin{aligned}
   & \Vert  \frac{\partial g(W)}{\partial W}  -  \frac{\partial g(W')}{\partial W'}  \Vert \le L \Vert W - W' \Vert \\
    \Rightarrow \quad &\Vert 2(W-W')XX^{\top} \Vert \le L \Vert W - W' \Vert \\
     \Rightarrow \quad &L =2 \Vert XX^{\top} \Vert = 2 \rho. \\
\end{aligned}
\end{equation}
Thus, we derive the desired inequality by applying Lemma~\ref{lemma:lipschitz_gradient}, as shown below:
\begin{equation}
\begin{aligned}
 \textbf{Tr}[g(W)] &\le \textbf{Tr}[g(W')] + \textbf{Tr}[\nabla g(W')^{\top}(W-W')] \\ & +\frac{L}{2} \textbf{Tr}[(W-W')(W-W')^{\top}].
\end{aligned}
\end{equation}
\end{proof}

Revisiting the MLF in Eq.~\eqref{eq:tight_bound}, 
we obtain the approximate posterior density of $W$ by substituting $p(Y|X,V,K,W)$ with its lower bound $\hat{p}(Y|X,V,K,W,W')$ via Bayesian rules as follows:
\begin{equation}
\label{eq:approximate_lower_bound}
    \begin{aligned}
        p(W|V,K) &\approx \frac{\hat{p}(Y|X,V,W,W') p(W|V,K)}{\int \hat{p}(Y|X,V,W,W') p(W|V,K) dw} \\
        &=\mathcal{MN}(\mu, V, S_{xx}).
    \end{aligned}
\end{equation}
with 
\begin{equation}
    \begin{aligned}
        \mu &=\rho W'-W'XX^{\top}+YX^{\top}, \\
        S_{xx} &=\rho I+K.
    \end{aligned}
\end{equation}

By substituting Eq.~\eqref{eq:approximate_lower_bound} into the marginal likelihood expression, we arrive at the following formulation:
\begin{proposition}
The new marginal likelihood function, up to a constant, is 
\begin{equation}
\begin{aligned}
    \mathcal L &= \ln p(Y|X,V,K,W,W') \\
    &\propto  m \ln |C| + N \ln |V| + \textbf{Tr}(V^{-1}R(W,W')) \\
    &+ \textbf{Tr}(V^{-1}WKW^{\top}).
\end{aligned}
\end{equation}
\end{proposition}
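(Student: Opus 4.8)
The plan is to re-run the computation that produced \eqref{formula:ln_p} and its latent-variable reformulation \eqref{eq:reformulate}, but with the data-fit block $g(W)=(Y-WX)(Y-WX)^{\top}$ replaced by the majorizer $R(W,W')$ supplied by the last Lemma. Concretely, after introducing $W$ as in \eqref{eq:reformulate}, the augmented negative log evidence reads $m\ln|C|+N\ln|V|+\textbf{Tr}\bigl(V^{-1}[g(W)+WKW^{\top}]\bigr)$, where the first two terms are just the $W$-independent normalizer of the matrix normal $\mathcal{MN}(0,V,C)$ and are untouched by the surrogate construction. Defining $\hat p(Y\mid X,V,K,W,W')$ by substituting $R(W,W')$ for $g(W)$ inside the trace produces precisely the claimed expression; what remains is to (a) certify that this substitution is a tight lower bound in the sense of \eqref{eq:tight_bound}, and (b) confirm the determinant bookkeeping by completing the square in $W$.

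\textbf{Key steps.} First, multiply $\hat p$ by the prior \eqref{eq:w_prior} and collect terms: the $W$-quadratic part of $R(W,W')+WKW^{\top}$ is $W(\rho I+K)W^{\top}=WS_{xx}W^{\top}$ with $S_{xx}=\rho I+K$, and completing the square in $W$ yields $\textbf{Tr}\bigl(V^{-1}(W-\mu)S_{xx}(W-\mu)^{\top}\bigr)$ up to a $W$-free remainder, with $\mu$ satisfying $\mu S_{xx}=\rho W'-W'XX^{\top}+YX^{\top}$; this is exactly the approximate posterior $\mathcal{MN}(\mu,V,S_{xx})$ of \eqref{eq:approximate_lower_bound}, which both validates the algebra and identifies the Gaussian integral. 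Second, recover the evidence as joint divided by posterior: the $W$-dependent Gaussians cancel, the column determinant contributes $\tfrac{m}{2}\ln|S_{xx}|-\tfrac{m}{2}\ln|K|=\tfrac{m}{2}\ln|I+\rho K^{-1}|=\tfrac{m}{2}\ln|C|$ (with $C$ now the updated, diagonal $I+\rho K^{-1}$), and the $V$-determinant contributes $\tfrac{N}{2}\ln|V|$. Dropping the additive constant and the overall factor $2$ leaves $m\ln|C|+N\ln|V|+\textbf{Tr}(V^{-1}R(W,W'))+\textbf{Tr}(V^{-1}WKW^{\top})$, as claimed.

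\textbf{Main obstacle.} The delicate point — not covered by the last Lemma, which only bounds the unweighted trace — is that $R$ majorizes $g$ after the $V^{-1}$ weighting and with equality attainable, which is what \eqref{eq:tight_bound} requires. I would establish it through the exact identity
\begin{equation}
R(W,W')-g(W)=(W-W')\bigl(\rho I-XX^{\top}\bigr)(W-W')^{\top}+M,
\end{equation}
where $M=(Y-W'X)X^{\top}(W-W')^{\top}-(W-W')X(Y-W'X)^{\top}$ is skew-symmetric, so $\textbf{Tr}(V^{-1}M)=0$ because $V^{-1}$ is symmetric, while $\rho I-XX^{\top}\succeq0$ (since $\rho=\lambda_{\max}(XX^{\top})$) forces $\textbf{Tr}\bigl(V^{-1}(W-W')(\rho I-XX^{\top})(W-W')^{\top}\bigr)\ge0$, vanishing at $W'=W$. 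Hence $\textbf{Tr}(V^{-1}R(W,W'))\ge\textbf{Tr}(V^{-1}g(W))$ with $W'=W$ attaining equality, which is \eqref{eq:tight_bound}; the Proposition then follows from the accounting above. A minor but easy-to-slip point is that the determinant in the statement is the \emph{updated} $C=I+\rho K^{-1}$ rather than the original $I+X^{\top}K^{-1}X$, since the surrogate replaces $XX^{\top}$ by $\rho I$.
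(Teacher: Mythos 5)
Your derivation is correct, and its main line coincides with the route the paper implicitly takes: substitute the surrogate likelihood into the evidence, complete the square in $W$ against the prior to recover the approximate posterior $\mathcal{MN}(\mu,V,S_{xx})$ of \eqref{eq:approximate_lower_bound} with $\mu S_{xx}=\rho W'-W'XX^{\top}+YX^{\top}$ and $S_{xx}=\rho I+K$, and then read the evidence off the joint-over-posterior bookkeeping, retaining the latent $W$ in the same variational sense as \eqref{eq:reformulate}. Where you genuinely differ is in the supporting argument, and your version is the more apt one: the paper's Lemma only bounds the unweighted trace $\textbf{Tr}(g(W))\le\textbf{Tr}(R(W,W'))$, which is not literally what \eqref{eq:tight_bound} requires, whereas your exact identity $R(W,W')-g(W)=(W-W')(\rho I-XX^{\top})(W-W')^{\top}+M$ with $M$ skew-symmetric (so $\textbf{Tr}(V^{-1}M)=0$) yields the $V^{-1}$-weighted majorization with the same $\rho=\lambda_{\max}(XX^{\top})$ and equality at $W'=W$, closing a gap the Lipschitz-gradient lemma leaves open. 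Your reading of the determinant term is likewise the self-consistent one: with the surrogate, the Gaussian integral over $W$ contributes $m\ln(|S_{xx}|/|K|)=m\ln|I+\rho K^{-1}|$, and only this interpretation of ``$C$'' reproduces the paper's update \eqref{eq:alpha} (which uses $(S_{xx}^{(k)})^{-1}$ with $S_{xx}^{(k)}=K^{(k-1)}+\rho I$) and the claimed $\mathcal{O}(m^3+d^2)$ cost, while the paper reuses the symbol $C$ without redefining it. One point to tighten in your write-up: after the $W$-dependent Gaussian factors cancel in joint/posterior, what survives is the $W$-free remainder of the square completion, so keeping $\textbf{Tr}(V^{-1}R(W,W'))+\textbf{Tr}(V^{-1}WKW^{\top})$ in the final statement should be justified explicitly by noting that this remainder equals the minimum over $W$ of the bracketed expression (exactly the convention of \eqref{eq:reformulate}); as phrased, your step (b) leaves ambiguous whether $W$ is free or set to $\mu$, though this mirrors a looseness already present in the paper.
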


This reformulation leads to a new objective function involving 4 sets of variables $W, W', V, K$. Direct joint optimization of these variables is intractable due to their interdependence. 
Accordingly, we adopt the BCD method to optimize the negative logarithm of MLF, i.e., 
\begin{equation}
\label{eq:mlf_surrogated}
    \begin{aligned}
        (W^{(k)},W'^{(k)},V^{(k)},K^{(k)}) \in \arg \min \mathcal L(W,W',V,K).
    \end{aligned}
\end{equation}
Specifically, the BCD method is utilized to alternatively solve Eq.~\eqref{eq:mlf_surrogated} as follows:
\begin{equation}
\begin{aligned}
    W^{(k)} &\in \arg \min_{W} \mathcal L(W,W'^{(k-1)},V^{(k-1)},K^{(k-1)}), \\
    W'^{(k)} &\in \arg \min_{W'} \mathcal L(W^{(k)},W',V^{(k-1)},K^{(k-1)}), \\
    V^{(k)} &\in \arg \min_{V} \mathcal L(W^{(k)},W'^{(k)},V,K^{(k-1)}), \\
    K^{(k)} &\in \arg \min_{K} \mathcal L(W^{(k)},W'^{(k)},V^{(k)},K). \\
\end{aligned}
\end{equation}
The optimal value of $W^{(k)},W'^{(k)},V^{(k)},K^{(k)}$ can be obtained by setting its gradient to zero respectively, leading to the following update schemes:
\begin{align}
S_{xx}^{(k)} &= K^{(k-1)} + \rho I, \label{eq:S_xx} \\
W^{(k)} &= \big[\rho W'^{(k-1)} - W'^{(k-1)}XX^{\top} + YX^{\top}\big](S_{xx}^{(k)})^{-1}, \label{eq:W_update} \\
W'^{(k)} &= W^{(k)}, \label{eq:W_prime} \\
V^{(k)} &= \frac{R(W^{(k)}, W'^{(k)}) + W^{(k)}K^{(k-1)}(W^{(k)})^{\top}}{N},\label{eq:V_update} \\
V^{(k)}, \quad &(V^{(k)})^{-1} \leftarrow \textbf{glasso}(V^{(k)}, \lambda), \label{eq:glasso}\\
  K^{(k)} &= \frac{m}{\textbf{diag}[(W^{(k)})^\top (V^{(k)})^{-1} W^{(k)} + m(S_{xx}^{(k)})^{-1}]} \label{eq:alpha}.
\end{align}

\subsection{Algorithm summary}
\begin{algorithm}
    \caption{Surrogate NARD}
    \label{alg:surrogate_nard}
    \begin{algorithmic}[1]
        \REQUIRE Input data $X$, $Y$, $\epsilon$
        \ENSURE Estimated $\alpha$, $V$, $W$
        
        \STATE Initialize $\alpha$ elements, $V \leftarrow \hat{V}_{\text{MLE}}$, $W^{(0)} \leftarrow YX^{\top} (K + XX^{\top})^{-1}$.
        
        \FOR{$k = 1$ \textbf{to} $K$}
            \STATE Perform equations (\ref{eq:S_xx}), (\ref{eq:W_update}), (\ref{eq:W_prime}), (\ref{eq:V_update}), (\ref{eq:glasso}), (\ref{eq:alpha}).
            \IF{$\| W^{(k)} - W^{(k-1)} \|_F \leq \epsilon$}
                \STATE \textbf{Break}.
            \ENDIF
        \ENDFOR
    \end{algorithmic}
\end{algorithm}

Algorithm \ref{alg:surrogate_nard} presents a comprehensive overview of Surrogate NARD. Since $S_{xx}$ is a diagonal matrix, the update in (\ref{eq:S_xx}) involves only the inversion of a diagonal matrix, which has a computational complexity of $\mathcal{O}(d)$.
\section{Hybrid NARD}
This section presents a hybrid method that integrates Sequential NARD and Surrogate NARD to further reduce computational costs. Specifically, we utilize a sequential update approach for the iterative steps of Surrogate NARD. This approach initiates by assessing the relevance of a new feature to determine its inclusion, akin to the Sequential NARD framework.

Once a feature is deemed relevant and included, we employ Surrogate NARD for subsequent calculations of the matrices $V, W$. This strategy eliminates the need for the costly matrix inversion operations typically required in full NARD implementations. By combining the feature selection process of Sequential NARD with the efficient computations of Surrogate NARD, we achieve a significantly lower computational complexity. The resulting method not only streamlines the iterative process but also ensures that we maintain comparable performance while handling high-dimensional data. 

Hybrid NARD retains the core structure and principles of Sequential NARD, with key variables such as \( Q \), \( q \), \( S \), \( s \), and \( \eta \) preserved, and updates are performed only when the MLF increases. However, the key difference is that Hybrid NARD requires the computation of \( \rho \) and the execution of steps (\ref{eq:S_xx}), (\ref{eq:W_update}), (\ref{eq:W_prime}), (\ref{eq:V_update}), (\ref{eq:glasso}) and (\ref{eq:alpha}) at each iteration, rather than relying on the updates specified in Sequential NARD. Consequently, \( W \) is updated in every iteration of Hybrid NARD, whereas in Sequential NARD, \( W \) is calculated only after the final values of \( X \), \( K \) and \( V \) have been determined.

\section{EXPERIMENTS} 
In this section, we evaluate the performance of the proposed methods on synthetic data using True Positive Rate (TPR), defined as \( \text{TPR} = \frac{\text{TP}}{\text{TP} + \text{FN}} \), and False Positive Rate (FPR), defined as \( \text{FPR} = \frac{\text{FP}}{\text{TN} + \text{FP}} \), where TP, FP, TN and FN represent true positives, false positives, true negatives, and false negatives, respectively. We run each experiment 10 times with different random seeds and report the average. For  Aging phenotype data, where true labels are unavailable, we use the Jaccard index to measure the overlap of biological associations identified by different algorithms. We also highlight the efficiency of the NARD and its variants. On the TCGA data, we focus on the effectiveness of NARD in identifying biological associations.  In addition, we conducted experiments on financial and air quality datasets to further demonstrate the versatility of our method; the detailed experimental results are provided in Appendix \ref{exp:data_diversity}. All experiments were performed on 32 Intel(R) Xeon(R) Platinum CPUs.

To select the optimal $\lambda$,  we employ a 5-fold cross-validation procedure. The dataset is partitioned into 5 disjoint subsets, and in each iteration, 1 subset is held out as the validation set while the remaining 4 subsets are used for model estimation. The objective function for selecting $\lambda_{\text{glasso}}$ is defined as:
\begin{equation}
    \lambda_{\text{glasso}} = \arg \min_{\lambda} \sum_{l=1}^5 
\bigg[
    \textbf{Tr}(\tilde{V_l}\Omega_{-l}) -\log |\Omega_{-l}| + \lambda \sum_{\substack{i \neq j}} |\omega_{ij}|
\bigg].
\end{equation}
Here $\tilde{V_l}$ is the empirical covariance estimator computed from the training data excluding the $l$-th fold, and $\Omega_{-l}$ is the estimated precision matrix based on this subset.
The log-likelihood is computed for each fold, and the $\lambda$ that maximizes the cross-validated log-likelihood is chosen. A grid search is performed over a range of candidate values for $\lambda$ , and the value that yields the best performance across all folds is selected for the final model and evaluation.

\subsection{Synthetic datasets}
The synthetic data are generated as follows. The covariance matrix \( V \) is constructed based on a graph generated by an Erd{\H{o}}s-R{\'e}nyi random graph \cite{erdds1959random} with a sparsity parameter \( p \). The entries of the precision matrix are sampled from a uniform distribution, after which the matrix is symmetrized and adjusted to ensure positive definiteness by controlling its minimum eigenvalue. The matrix \( W \) is generated with a different sparsity level, and its non-zero elements are drawn from a uniform distribution. Both the data matrix \( X \) and the error term \( E \) are sampled from their corresponding multivariate normal distributions, and the outcome \( Y \) is computed as \( Y = WX + E \).

We choose two representative categories of baseline methods: MRCE\footnote{\url{https://cran.r-project.org/web/packages/MRCE/index.html}} \cite{rothman2010sparse} and 
CAPME\footnote{\url{http://www-stat.wharton.upenn.edu/~tcai/paper/Softwares/capme_1.3.tar.gz}} \cite{cai2013covariate} as frequency-based approaches,
and HS-GHS\footnote{\url{https://github.com/liyf1988/HS_GHS}} \cite{li2021joint} and JRNS \footnote{\url{https://github.com/srijata06/JRNS_Stepwise}} \cite{samanta2022generalized} as Bayesian sampling-based algorithms.
\begin{table}[h]
\caption{Performance comparison of various methods. ($p=0.1$)}
\label{tab:performance}
\vskip 0.1in
\begin{center}
\begin{scriptsize}
\begin{sc}
\setlength{\tabcolsep}{2.2pt}
\begin{tabular}{lcccccccr}
  \toprule
  \textbf{Method} & \textbf{$d$} & \textbf{$m$} & \textbf{$N$} & TPR &FPR & Time (Total)\\ 
  \midrule
 MRCE & 5000& 1500&1500 & 0.9083& 0.0072 & 53\\
  CAPME & 5000& 1500&1500 & 0.8972& 0.0124 & 52\\
HS-GHS & 5000& 1500&1500& 0.9463& 0.0033 & $>$3000\\
JRNS & 5000& 1500&1500& 0.9485& 0.0037 & $>$3000\\
  NARD & 5000& 1500&1500 & 0.9483& 0.0062 & 49\\
  Sequential NARD & 5000& 1500& 1500& 0.9459& 0.0067 & 35 \\  
Surrogate NARD &5000 &1500 &1500  & 0.9462& 0.0072 & 31\\  
    Hybrid NARD & 5000& 1500 & 1500& 0.9471& 0.0068 & 23\\  
\bottomrule
\end{tabular}
\end{sc}
\end{scriptsize}
\end{center}
\vskip -0.1in
\end{table}

Table \ref{tab:performance} presents the results with \(N = 1500\), \(m = 1500\), and \(d = 5000\), highlighting estimation performance and CPU times for all methods. NARD and its variants perform similarly to HS-GHS, slightly outperforming MRCE in estimation, while demonstrating better computational efficiency. Among the methods, HS-GHS is the most time-consuming, while NARD-based approaches maintain relatively high efficiency, with performance gains becoming more pronounced as the dataset size increases. Table \ref{tab:vary_m_d} compares performance under different combinations of \(d\), \(m\), and \(N\), with a focus on time performance relative to MRCE.

Figure \ref{fig:time} illustrates the running times of NARD and Surrogate NARD for varying \(d\) values (\(d \in \{1500, 5000, 10000, 15000, 20000\}\) with \(N = 1500\) and \(m = 1500\)). The results show that while the computational time of NARD increases significantly as \(d\) grows, Surrogate NARD exhibits a lower time overhead. These trends are consistent with the theoretical complexity of the methods.

\begin{table}[h]
\caption{Impact of data size on performance. ($p=0.02$)}
\label{tab:vary_m_d}
\vskip 0.1in
\begin{center}
\begin{scriptsize}
\begin{sc}
\setlength{\tabcolsep}{2.2pt}
\begin{tabular}{lcccccccr}
  \toprule
  {Method} & \textbf{$d$} & \textbf{$m$} & \textbf{$N$} & {TPR} &{FPR} & {Time (1 step)} \\ 
  \midrule
MRCE & 1000 & 1000 & 5000 & 0.9802&0.0313&3.4 \\  
NARD & 1000 & 1000 & 5000 & 0.9825 & 0.0274 & 3.5  \\  
Surrogate NARD & 1000 & 1000 & 5000 &0.9818 &0.0281&2.8 \\  
  \midrule
MRCE & 2000 & 2000 & 10000 & 0.9797&0.0293&8.7 \\
NARD & 2000 & 2000 & 10000 & 0.9864 & 0.0205 & 8.7  \\
Surrogate NARD & 2000 & 2000 & 10000 &0.9838 &0.0223&6.0 \\
\midrule
MRCE & 5000 & 2000 & 20000 & 0.9698 &0.0411 &19.2  \\ 
NARD & 5000 & 2000 & 20000 & 0.9732& 0.0323 &17.4 \\  
Surrogate NARD & 5000 & 2000 & 20000 &0.9732 & 0.0379 &13.0\\
\bottomrule
\end{tabular}
\end{sc}
\end{scriptsize}
\end{center}
\vskip -0.1in
\end{table}

\begin{figure}[h]
\vskip 0.2in
\begin{center}
\centerline{\includegraphics[width=\columnwidth]{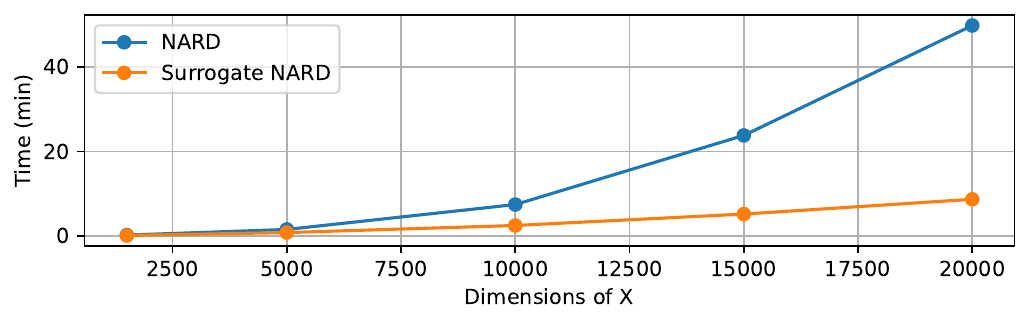}}
\caption{Time comparison for different methods.}
\label{fig:time}
\end{center}
\vskip -0.4in
\end{figure}

\subsection{Baselines}
We further evaluate the scalability of NARD on larger datasets. As shown in Table \ref{tab:appendix_large_scale}, the experimental results align with the theoretical complexity, demonstrating that as the data size increases, the time per iteration for NARD grows in a manner consistent with its expected computational behavior. The Surrogate NARD approach consistently outperforms NARD, showing substantial improvements in computational efficiency, especially at larger scales.
\begin{table}[h]
\caption{Average time per iteration. ($N=20000, m=2000$)}
\label{tab:appendix_large_scale}
\vskip 0.15in
\begin{center}
\begin{small}
\begin{sc}
\begin{tabular}{lccr}
  \toprule
 $d$ & MRCE & NARD & Surrogate NARD \\
  \midrule
 500 & 2.4 &2.1&  1.56 \\
 1000 &2.5 &2.4  &1.96 \\
 2000 &3.6&3.4 & 2.2   \\
 5000 &12.2&10.7 & 3.7  \\ 
 10000 &33.4&30.8&  8.9 \\
 15000 &77.5&70.9& 20.8  \\
 20000 &201.9&168.6& 33.7  \\
 30000 & 421.3&376.7 &64.7  \\
\bottomrule
\end{tabular}
\end{sc}
\end{small}
\end{center}
\vskip -0.1in
\end{table}

\subsection{Aging phenotype data}
We use data from a cohort of 1022 healthy individuals to construct a phenotypic network closely related to aging.  In total, we identify 5641 phenotypes that exhibited a significant Pearson correlation with age, including 1522 macro phenotypes and 4119 molecular phenotypes.
\vskip -0.1in
\begin{table}[h]
\caption{Associations under different algorithms.}
\label{tab:pheno}
\vskip 0.1in
\begin{center}
\begin{tiny}
\begin{sc}
\setlength{\tabcolsep}{1.pt}
\begin{tabular}{lcccccr}
  \toprule
  \multirow{2}{*}{Method} & \multirow{2}{*}{MRCE} & \multirow{2}{*}{HS-GHS} & \multirow{2}{*}{NARD} & \multicolumn{3}{c}{NARD Variants} \\ 
  \cmidrule(lr){5-7}
  & & & & Sequential & Surrogate & Hybrid \\ 
  \midrule
  \# of association & 15330 & 14983 & 15101 & 15014 & 15062 & 15039 \\  
  Jaccard index & 0.979  & -  & 0.988  & 0.988  & 0.989  & 0.989 \\
  \bottomrule
\end{tabular}
\end{sc}
\end{tiny}
\end{center}
\vskip -0.1in
\end{table}

\begin{figure}[h]
\centering
\includegraphics[width=0.95\linewidth]{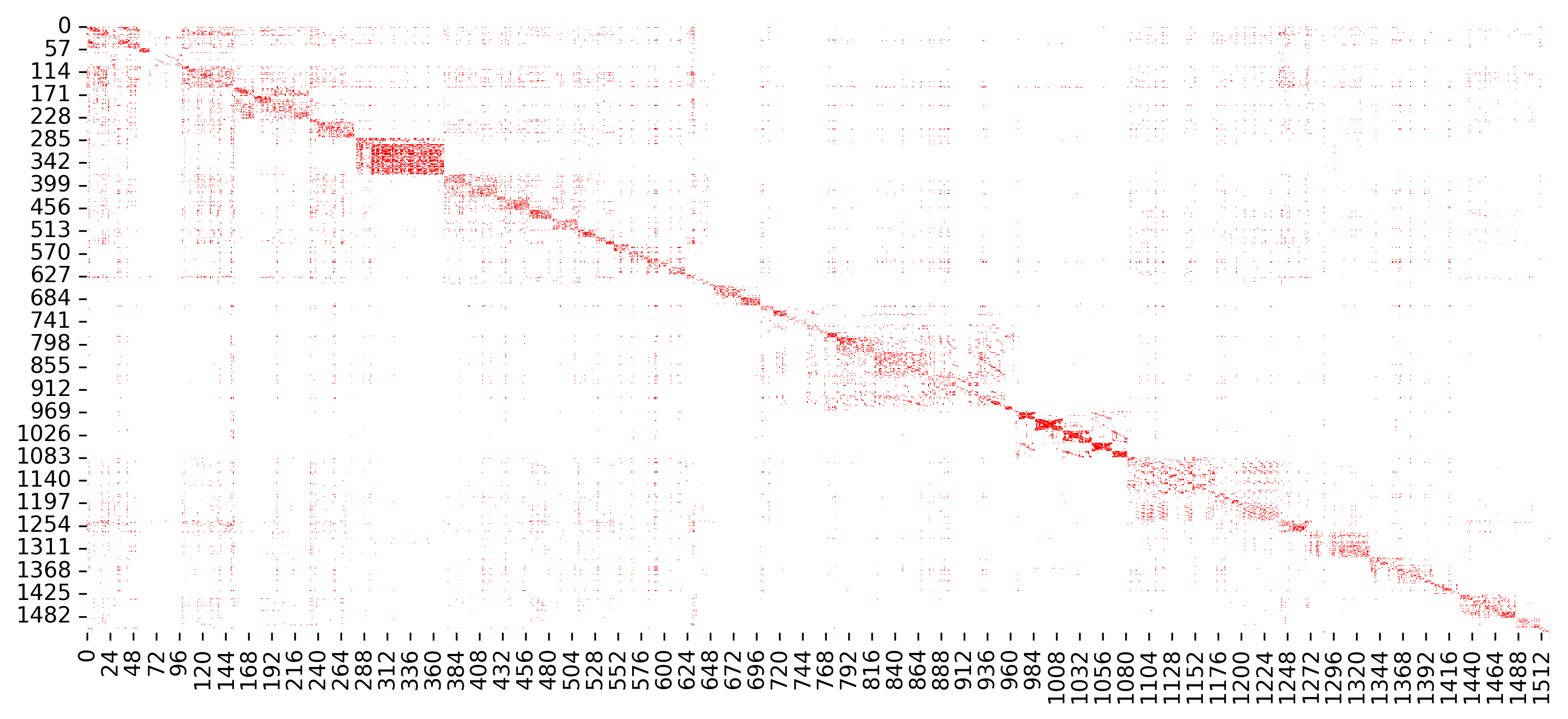}
\caption{Phenotype network in aging.}
\label{fig:phenotype_network}
\end{figure}

Figure \ref{fig:phenotype_network} illustrates the phenotypic network. Notably, features within the range of 280-370 exhibited a pronounced block structure, which corresponds to phenotypic data derived from the same tissue type. This structural coherence underscores the biological relevance of the identified phenotypes and suggests that similar traits may arise from shared underlying biological mechanisms. Furthermore, various algorithms demonstrate consistent performance on the dataset, with the Jaccard index for phenotype associations surpassing 98.5\%. In terms of computational efficiency, the NARD method is the slowest, taking approximately 24 minutes to complete, while the sequential NARD approach take approximately 14 minutes.
\subsection{TCGA cancer data}
To evaluate the effectiveness of NARD, we analyze data from The Cancer Genome Atlas (TCGA) \cite{weinstein2013tcga} across seven tumor types: colon adenocarcinoma (COAD), lung adenocarcinoma (LUAD), lung squamous cell carcinoma (LUSC), ovarian serous cystadenocarcinoma (OV), rectum adenocarcinoma (READ), skin cutaneous melanoma (SKCM), and uterine corpus endometrial carcinoma (UCEC). Each cancer type dataset includes mRNA expression profiles and RPPA-based proteomic data, reflecting the biological relationship where mRNA is translated into proteins. We choose 10 key signaling pathways based on recent studies \cite{akbani2014pan,cherniack2017integrated,li2017characterization} of RPPA-based proteomic profiling across various tumor types.

\begin{figure}[h]
\centering
\includegraphics[width=0.95\linewidth]{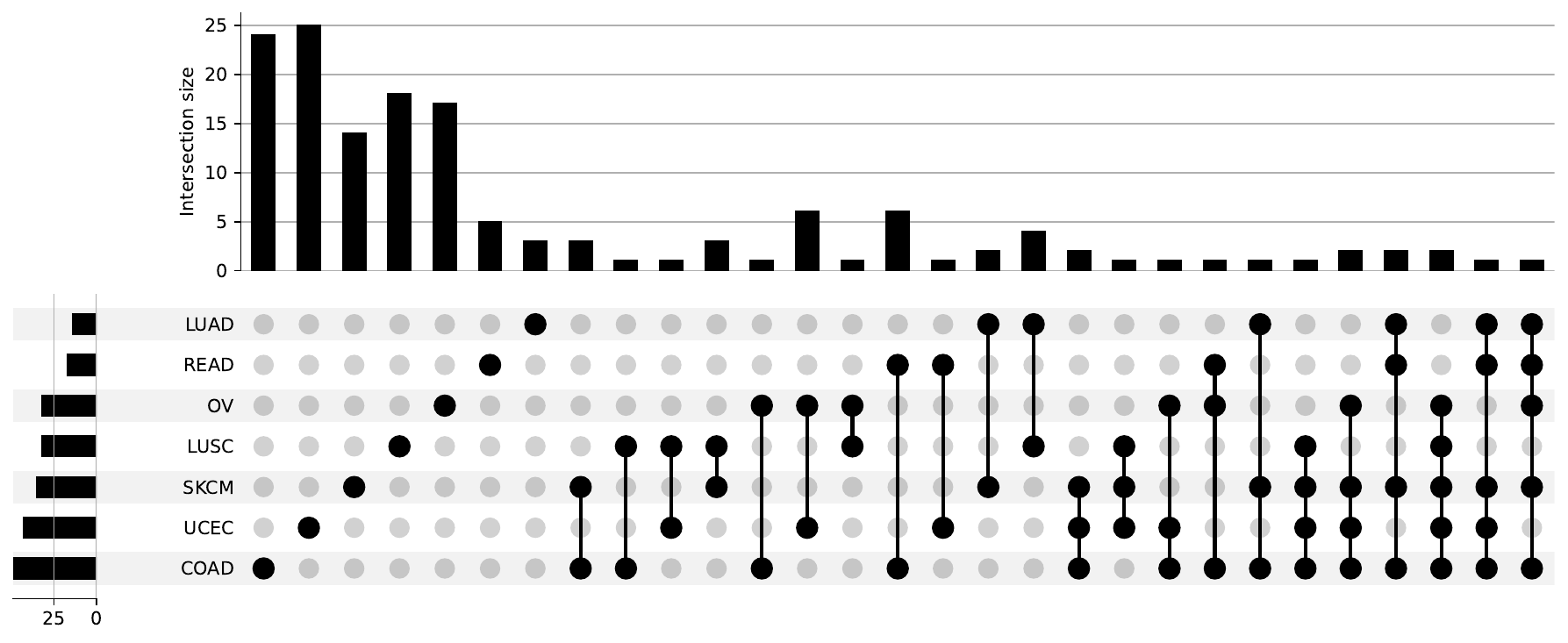}
\caption{UpSet plots illustrating the relationships among 10 pathways across 7 cancer types.}
\label{fig:upset_plot}
\end{figure}

\begin{figure}[h]
\centering
\includegraphics[width=0.95\linewidth]{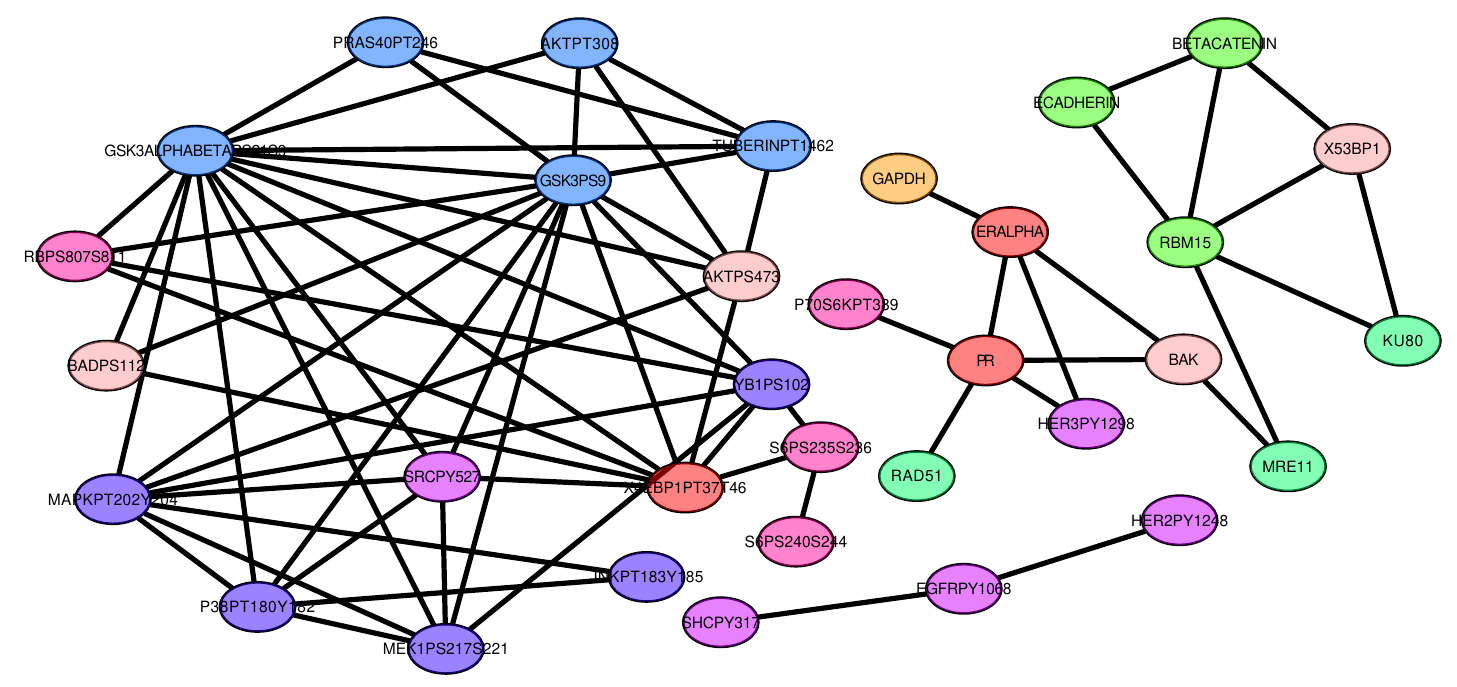}
\caption{Protein network of COAD. Different colors represent different pathways.}
\label{fig:protein_network}
\end{figure}
Figure \ref{fig:upset_plot} visualizes the UpSet plots \cite{lex2014upset} for 7 cancers in 10 pathways. The translational effects exhibit heterogeneity across cancer types, reflecting the expected biological differences. Figure \ref{fig:protein_network} presents the protein association network for COAD, highlighting interactions between proteins within the same pathway, as well as significant cross-talk between different pathways. The network plots for the other cancer types are included in the Appendix \ref{appendix:network_cancers}.

Sparsity-inducing priors like ARD enhance interpretability in biological applications, such as TCGA cancer data, by identifying key features. In our analysis across 7 tumor types, ARD highlighted important genes and proteins linked to signaling pathways. In Figure \ref{fig:upset_plot}, sparsity revealed consistent pathways across cancer types, exposing cancer-specific translational effects. In Figure \ref{fig:protein_network}, for COAD, sparsity highlighted critical protein interactions within pathways and cross-talk between them, aiding biological interpretation. In COAD, the PI3K/AKT pathway was highlighted by the interaction between GSK3ALPHABETAPS21S9 and AKTPS473 \cite{li2024protein}. This association indicates a key regulatory role in tumor growth and survival. The AKT signaling axis, activated by various upstream kinases like GSK3, has been implicated in colon cancer progression, making it a valuable target for further investigation and therapeutic development \cite{zhang2021glycogen,yao2020pp9}.

\section{Conclusion}
In this paper, we introduce the NARD framework and propose three variants to alleviate its computational burden, significantly reducing the cost while maintaining performance. Experimental results confirm the effectiveness and efficiency of our methods across diverse domains. While the original model assumes linear relationships, we have demonstrated that NARD can be readily extended to nonlinear settings via kernel-based techniques, further broadening its applicability.




\section*{Acknowledgements}
Authors acknowledge the constructive feedback of reviewers and the work of ICML’25 program and area chairs. This work was supported by the National Natural Science Foundation of China (Grant No. 82394432 and 92249302), and the Shanghai Municipal Science and Technology Major Project (Grant No. 2023SHZDZX02). Technical support was provided by the Human Phenome Data Center of Fudan University and the CFFF platform of Fudan University. This research was conducted during an internship at the Shanghai Academy of Artificial Intelligence for Science.



\section*{Impact Statement}
This paper presents work whose goal is to advance the field of 
Machine Learning. There are many potential societal consequences 
of our work, none which we feel must be specifically highlighted here.





\bibliography{sample}

\begin{thebibliography}{55}
\providecommand{\natexlab}[1]{#1}
\providecommand{\url}[1]{\texttt{#1}}
\expandafter\ifx\csname urlstyle\endcsname\relax
  \providecommand{\doi}[1]{doi: #1}\else
  \providecommand{\doi}{doi: \begingroup \urlstyle{rm}\Url}\fi

\bibitem[Akbani et~al.(2014)Akbani, Ng, Werner, Shahmoradgoli, Zhang, Ju, Liu, Yang, Yoshihara, Li, et~al.]{akbani2014pan}
Akbani, R., Ng, P. K.~S., Werner, H.~M., Shahmoradgoli, M., Zhang, F., Ju, Z., Liu, W., Yang, J.-Y., Yoshihara, K., Li, J., et~al.
\newblock A pan-cancer proteomic perspective on the cancer genome atlas.
\newblock \emph{Nature Communications}, 5\penalty0 (1):\penalty0 3887, 2014.

\bibitem[Al-Shoukairi et~al.(2017)Al-Shoukairi, Schniter, and Rao]{al2017gamp}
Al-Shoukairi, M., Schniter, P., and Rao, B.~D.
\newblock A gamp-based low complexity sparse bayesian learning algorithm.
\newblock \emph{IEEE Transactions on Signal Processing}, 66\penalty0 (2):\penalty0 294--308, 2017.

\bibitem[Ament \& Gomes(2021)Ament and Gomes]{ament2021sparse}
Ament, S.~E. and Gomes, C.~P.
\newblock Sparse bayesian learning via stepwise regression.
\newblock In \emph{International Conference on Machine Learning}, pp.\  264--274. PMLR, 2021.

\bibitem[Babacan et~al.(2009)Babacan, Molina, and Katsaggelos]{babacan2009bayesian}
Babacan, S.~D., Molina, R., and Katsaggelos, A.~K.
\newblock Bayesian compressive sensing using laplace priors.
\newblock \emph{IEEE Transactions on Image Processing}, 19\penalty0 (1):\penalty0 53--63, 2009.

\bibitem[Banerjee et~al.(2008)Banerjee, El~Ghaoui, and d'Aspremont]{banerjee2008model}
Banerjee, O., El~Ghaoui, L., and d'Aspremont, A.
\newblock Model selection through sparse maximum likelihood estimation for multivariate gaussian or binary data.
\newblock \emph{Journal of Machine Learning Research}, 9:\penalty0 485--516, 2008.

\bibitem[Bhadra \& Mallick(2013)Bhadra and Mallick]{bhadra2013joint}
Bhadra, A. and Mallick, B.~K.
\newblock Joint high-dimensional bayesian variable and covariance selection with an application to eqtl analysis.
\newblock \emph{Biometrics}, 69\penalty0 (2):\penalty0 447--457, 2013.

\bibitem[Boyd \& Vandenberghe(2004)Boyd and Vandenberghe]{boyd2004convex}
Boyd, S. and Vandenberghe, L.
\newblock \emph{Convex optimization}.
\newblock Cambridge University Press, 2004.

\bibitem[Breiman \& Friedman(1997)Breiman and Friedman]{breiman1997predicting}
Breiman, L. and Friedman, J.~H.
\newblock Predicting multivariate responses in multiple linear regression.
\newblock \emph{Journal of the Royal Statistical Society Series B: Statistical Methodology}, 59\penalty0 (1):\penalty0 3--54, 1997.

\bibitem[Cai et~al.(2013)Cai, Li, Liu, and Xie]{cai2013covariate}
Cai, T.~T., Li, H., Liu, W., and Xie, J.
\newblock Covariate-adjusted precision matrix estimation with an application in genetical genomics.
\newblock \emph{Biometrika}, 100\penalty0 (1):\penalty0 139--156, 2013.

\bibitem[Chen et~al.(2016)Chen, Ren, Zhao, and Zhou]{chen2016asymptotically}
Chen, M., Ren, Z., Zhao, H., and Zhou, H.
\newblock Asymptotically normal and efficient estimation of covariate-adjusted gaussian graphical model.
\newblock \emph{Journal of the American Statistical Association}, 111\penalty0 (513):\penalty0 394--406, 2016.

\bibitem[Cherniack et~al.(2017)Cherniack, Shen, Walter, Stewart, Murray, Bowlby, Hu, Ling, Soslow, Broaddus, et~al.]{cherniack2017integrated}
Cherniack, A.~D., Shen, H., Walter, V., Stewart, C., Murray, B.~A., Bowlby, R., Hu, X., Ling, S., Soslow, R.~A., Broaddus, R.~R., et~al.
\newblock Integrated molecular characterization of uterine carcinosarcoma.
\newblock \emph{Cancer Cell}, 31\penalty0 (3):\penalty0 411--423, 2017.

\bibitem[Cohen et~al.(2022)Cohen, Ferrucci, F{\"u}l{\"o}p, Gravel, Hao, Kriete, Levine, Lipsitz, Olde~Rikkert, Rutenberg, et~al.]{cohen2022complex}
Cohen, A.~A., Ferrucci, L., F{\"u}l{\"o}p, T., Gravel, D., Hao, N., Kriete, A., Levine, M.~E., Lipsitz, L.~A., Olde~Rikkert, M.~G., Rutenberg, A., et~al.
\newblock A complex systems approach to aging biology.
\newblock \emph{Nature Aging}, 2\penalty0 (7):\penalty0 580--591, 2022.

\bibitem[Deshpande et~al.(2019)Deshpande, Ro{\v{c}}kov{\'a}, and George]{deshpande2019simultaneous}
Deshpande, S.~K., Ro{\v{c}}kov{\'a}, V., and George, E.~I.
\newblock Simultaneous variable and covariance selection with the multivariate spike-and-slab lasso.
\newblock \emph{Journal of Computational and Graphical Statistics}, 28\penalty0 (4):\penalty0 921--931, 2019.

\bibitem[Erd{\H{o}}s \& R{\'e}nyi(1959)Erd{\H{o}}s and R{\'e}nyi]{erdds1959random}
Erd{\H{o}}s, P. and R{\'e}nyi, A.
\newblock On random graphs i.
\newblock \emph{Publicationes Mathematicae Debrecen}, 6\penalty0 (290-297):\penalty0 18, 1959.

\bibitem[Fan et~al.(2009)Fan, Feng, and Wu]{fan2009network}
Fan, J., Feng, Y., and Wu, Y.
\newblock Network exploration via the adaptive lasso and scad penalties.
\newblock \emph{The Annals of Applied Atatistics}, 3\penalty0 (2):\penalty0 521, 2009.

\bibitem[Faul \& Tipping(2001)Faul and Tipping]{faul2001analysis}
Faul, A. and Tipping, M.
\newblock Analysis of sparse bayesian learning.
\newblock \emph{Advances in Neural Information Processing Systems}, 14, 2001.

\bibitem[Friedman et~al.(2008)Friedman, Hastie, and Tibshirani]{friedman2008sparse}
Friedman, J., Hastie, T., and Tibshirani, R.
\newblock Sparse inverse covariance estimation with the graphical lasso.
\newblock \emph{Biostatistics}, 9\penalty0 (3):\penalty0 432--441, 2008.

\bibitem[Gupta \& Nagar(2018)Gupta and Nagar]{gupta2018matrix}
Gupta, A.~K. and Nagar, D.~K.
\newblock \emph{Matrix variate distributions}.
\newblock Chapman and Hall/CRC, 2018.

\bibitem[Ha et~al.(2021)Ha, Stingo, and Baladandayuthapani]{ha2021bayesian}
Ha, M.~J., Stingo, F.~C., and Baladandayuthapani, V.
\newblock Bayesian structure learning in multilayered genomic networks.
\newblock \emph{Journal of the American Statistical Association}, 116\penalty0 (534):\penalty0 605--618, 2021.

\bibitem[Hastie et~al.(2009)Hastie, Tibshirani, Friedman, and Friedman]{hastie2009elements}
Hastie, T., Tibshirani, R., Friedman, J.~H., and Friedman, J.~H.
\newblock \emph{The elements of statistical learning: data mining, inference, and prediction}, volume~2.
\newblock Springer, 2009.

\bibitem[He et~al.(2016)He, Kuhn, and Parida]{he2016novel}
He, D., Kuhn, D., and Parida, L.
\newblock Novel applications of multitask learning and multiple output regression to multiple genetic trait prediction.
\newblock \emph{Bioinformatics}, 32\penalty0 (12):\penalty0 i37--i43, 2016.

\bibitem[Hunter \& Lange(2004)Hunter and Lange]{hunter2004tutorial}
Hunter, D.~R. and Lange, K.
\newblock A tutorial on mm algorithms.
\newblock \emph{The American Statistician}, 58\penalty0 (1):\penalty0 30--37, 2004.

\bibitem[Kingma \& Welling(2014)Kingma and Welling]{kingma14vae}
Kingma, D.~P. and Welling, M.
\newblock Auto-encoding variational bayes.
\newblock In \emph{International Conference on Learning Representations}, 2014.

\bibitem[Kristensen et~al.(2014)Kristensen, Lingj{\ae}rde, Russnes, Vollan, Frigessi, and B{\o}rresen-Dale]{kristensen2014principles}
Kristensen, V.~N., Lingj{\ae}rde, O.~C., Russnes, H.~G., Vollan, H. K.~M., Frigessi, A., and B{\o}rresen-Dale, A.-L.
\newblock Principles and methods of integrative genomic analyses in cancer.
\newblock \emph{Nature Reviews Cancer}, 14\penalty0 (5):\penalty0 299--313, 2014.

\bibitem[Lex et~al.(2014)Lex, Gehlenborg, Strobelt, Vuillemot, and Pfister]{lex2014upset}
Lex, A., Gehlenborg, N., Strobelt, H., Vuillemot, R., and Pfister, H.
\newblock Upset: visualization of intersecting sets.
\newblock \emph{IEEE Transactions on Visualization and Computer Graphics}, 20\penalty0 (12):\penalty0 1983--1992, 2014.

\bibitem[Li et~al.(2017)Li, Zhao, Akbani, Liu, Ju, Ling, Vellano, Roebuck, Yu, Eterovic, et~al.]{li2017characterization}
Li, J., Zhao, W., Akbani, R., Liu, W., Ju, Z., Ling, S., Vellano, C.~P., Roebuck, P., Yu, Q., Eterovic, A.~K., et~al.
\newblock Characterization of human cancer cell lines by reverse-phase protein arrays.
\newblock \emph{Cancer Cell}, 31\penalty0 (2):\penalty0 225--239, 2017.

\bibitem[Li et~al.(2024)Li, Liu, Mojumdar, Kim, Zhou, Ju, Kumar, Ng, Chen, Davies, et~al.]{li2024protein}
Li, J., Liu, W., Mojumdar, K., Kim, H., Zhou, Z., Ju, Z., Kumar, S.~V., Ng, P. K.-S., Chen, H., Davies, M.~A., et~al.
\newblock A protein expression atlas on tissue samples and cell lines from cancer patients provides insights into tumor heterogeneity and dependencies.
\newblock \emph{Nature Cancer}, 5\penalty0 (10):\penalty0 1579--1595, 2024.

\bibitem[Li et~al.(2019)Li, Craig, and Bhadra]{li2019graphicalhorseshoe}
Li, Y., Craig, B.~A., and Bhadra, A.
\newblock The graphical horseshoe estimator for inverse covariance matrices.
\newblock \emph{Journal of Computational and Graphical Statistics}, 28\penalty0 (3):\penalty0 747--757, 2019.

\bibitem[Li et~al.(2021)Li, Datta, Craig, and Bhadra]{li2021joint}
Li, Y., Datta, J., Craig, B.~A., and Bhadra, A.
\newblock Joint mean--covariance estimation via the horseshoe.
\newblock \emph{Journal of Multivariate Analysis}, 183:\penalty0 104716, 2021.

\bibitem[Lin et~al.(2016)Lin, Basu, Banerjee, and Michailidis]{lin2016penalized}
Lin, J., Basu, S., Banerjee, M., and Michailidis, G.
\newblock Penalized maximum likelihood estimation of multi-layered gaussian graphical models.
\newblock \emph{Journal of Machine Learning Research}, 17\penalty0 (146):\penalty0 1--51, 2016.

\bibitem[MacKay(1992)]{mackay1992bayesian}
MacKay, D.~J.
\newblock Bayesian interpolation.
\newblock \emph{Neural Computation}, 4\penalty0 (3):\penalty0 415--447, 1992.

\bibitem[MacKay et~al.(1994)]{mackay1994bayesian}
MacKay, D.~J. et~al.
\newblock Bayesian nonlinear modeling for the prediction competition.
\newblock \emph{ASHRAE Transactions}, 100\penalty0 (2):\penalty0 1053--1062, 1994.

\bibitem[Minka(2000)]{minka2000bayesian}
Minka, T.
\newblock Bayesian linear regression.
\newblock Technical report, Citeseer, 2000.

\bibitem[Moon et~al.(2019)Moon, Van~Dijk, Wang, Gigante, Burkhardt, Chen, Yim, Elzen, Hirn, Coifman, et~al.]{moon2019visualizing}
Moon, K.~R., Van~Dijk, D., Wang, Z., Gigante, S., Burkhardt, D.~B., Chen, W.~S., Yim, K., Elzen, A. v.~d., Hirn, M.~J., Coifman, R.~R., et~al.
\newblock Visualizing structure and transitions in high-dimensional biological data.
\newblock \emph{Nature Biotechnology}, 37\penalty0 (12):\penalty0 1482--1492, 2019.

\bibitem[Rothman et~al.(2010)Rothman, Levina, and Zhu]{rothman2010sparse}
Rothman, A.~J., Levina, E., and Zhu, J.
\newblock Sparse multivariate regression with covariance estimation.
\newblock \emph{Journal of Computational and Graphical Statistics}, 19\penalty0 (4):\penalty0 947--962, 2010.

\bibitem[Samanta et~al.(2022)Samanta, Khare, and Michailidis]{samanta2022generalized}
Samanta, S., Khare, K., and Michailidis, G.
\newblock A generalized likelihood-based bayesian approach for scalable joint regression and covariance selection in high dimensions.
\newblock \emph{Statistics and Computing}, 32\penalty0 (3):\penalty0 47, 2022.

\bibitem[Stephens(2013)]{stephens2013unified}
Stephens, M.
\newblock A unified framework for association analysis with multiple related phenotypes.
\newblock \emph{PloS One}, 8\penalty0 (7):\penalty0 e65245, 2013.

\bibitem[Sun et~al.(2016)Sun, Babu, and Palomar]{sun2016majorization}
Sun, Y., Babu, P., and Palomar, D.~P.
\newblock Majorization-minimization algorithms in signal processing, communications, and machine learning.
\newblock \emph{IEEE Transactions on Signal Processing}, 65\penalty0 (3):\penalty0 794--816, 2016.

\bibitem[Tibshirani(1996)]{tibshirani1996regression}
Tibshirani, R.
\newblock Regression shrinkage and selection via the lasso.
\newblock \emph{Journal of the Royal Statistical Society Series B: Statistical Methodology}, 58\penalty0 (1):\penalty0 267--288, 1996.

\bibitem[Tipping(2001)]{tipping2001sparse}
Tipping, M.~E.
\newblock Sparse bayesian learning and the relevance vector machine.
\newblock \emph{Journal of Machine Learning Research}, 1\penalty0 (Jun):\penalty0 211--244, 2001.

\bibitem[Tipping \& Faul(2003)Tipping and Faul]{tipping2003fast}
Tipping, M.~E. and Faul, A.~C.
\newblock Fast marginal likelihood maximisation for sparse bayesian models.
\newblock In \emph{International workshop on Artificial Intelligence and Statistics}, pp.\  276--283. PMLR, 2003.

\bibitem[Tseng(2001)]{tseng2001bcd_convergence}
Tseng, P.
\newblock Convergence of a block coordinate descent method for nondifferentiable minimization.
\newblock \emph{Journal of Optimization Theory and Applications}, 109:\penalty0 475--494, 2001.

\bibitem[Wang(2010)]{wang2010sparse}
Wang, H.
\newblock Sparse seemingly unrelated regression modelling: Applications in finance and econometrics.
\newblock \emph{Computational Statistics \& Data Analysis}, 54\penalty0 (11):\penalty0 2866--2877, 2010.

\bibitem[Wang et~al.(2024)Wang, Li, Yue, and Yuan]{wang2024iterative_min_min}
Wang, Y., Li, J., Yue, Z., and Yuan, Y.
\newblock An iterative min-min optimization method for sparse {B}ayesian learning.
\newblock In \emph{Proceedings of the 41st International Conference on Machine Learning}, volume 235 of \emph{Proceedings of Machine Learning Research}, pp.\  50859--50873. PMLR, 21--27 Jul 2024.

\bibitem[Weinstein et~al.(2013)Weinstein, Collisson, Mills, Shaw, Ozenberger, Ellrott, Shmulevich, Sander, and Stuart]{weinstein2013tcga}
Weinstein, J.~N., Collisson, E.~A., Mills, G.~B., Shaw, K.~R., Ozenberger, B.~A., Ellrott, K., Shmulevich, I., Sander, C., and Stuart, J.~M.
\newblock The cancer genome atlas pan-cancer analysis project.
\newblock \emph{Nature Genetics}, 45\penalty0 (10):\penalty0 1113--1120, 2013.

\bibitem[Wipf \& Nagarajan(2007)Wipf and Nagarajan]{wipf2007newview}
Wipf, D. and Nagarajan, S.
\newblock A new view of automatic relevance determination.
\newblock \emph{Advances in Neural Information Processing Systems}, 20, 2007.

\bibitem[Wipf \& Nagarajan(2010)Wipf and Nagarajan]{wipf2010iterative}
Wipf, D. and Nagarajan, S.
\newblock Iterative reweighted $l_1 $ and $l_2 $ methods for finding sparse solutions.
\newblock \emph{IEEE Journal of Selected Topics in Signal Processing}, 4\penalty0 (2):\penalty0 317--329, 2010.

\bibitem[Wipf \& Rao(2004)Wipf and Rao]{wipf2004sparse}
Wipf, D.~P. and Rao, B.~D.
\newblock Sparse bayesian learning for basis selection.
\newblock \emph{IEEE Transactions on Signal Processing}, 52\penalty0 (8):\penalty0 2153--2164, 2004.

\bibitem[Yao et~al.(2020)Yao, Li, Yang, Ding, Zhang, Li, Liu, Zhao, Wang, Tang, et~al.]{yao2020pp9}
Yao, M., Li, R., Yang, Z., Ding, Y., Zhang, W., Li, W., Liu, M., Zhao, C., Wang, Y., Tang, H., et~al.
\newblock Pp9, a steroidal saponin, induces g2/m arrest and apoptosis in human colorectal cancer cells by inhibiting the pi3k/akt/gsk3$\beta$ pathway.
\newblock \emph{Chemico-Biological Interactions}, 331:\penalty0 109246, 2020.

\bibitem[Yuan \& Lin(2007)Yuan and Lin]{yuan2007model}
Yuan, M. and Lin, Y.
\newblock Model selection and estimation in the gaussian graphical model.
\newblock \emph{Biometrika}, 94\penalty0 (1):\penalty0 19--35, 2007.

\bibitem[Zellner(1962)]{zellner1962efficient}
Zellner, A.
\newblock An efficient method of estimating seemingly unrelated regressions and tests for aggregation bias.
\newblock \emph{Journal of the American Statistical Association}, 57\penalty0 (298):\penalty0 348--368, 1962.

\bibitem[Zhang et~al.(2021)Zhang, Tian, Zhou, Li, Chen, Song, Huan, Bao, Zhang, Miao, et~al.]{zhang2021glycogen}
Zhang, N., Tian, Y.-N., Zhou, L.-N., Li, M.-Z., Chen, H.-D., Song, S.-S., Huan, X.-J., Bao, X.-B., Zhang, A., Miao, Z.-H., et~al.
\newblock Glycogen synthase kinase 3$\beta$ inhibition synergizes with parp inhibitors through the induction of homologous recombination deficiency in colorectal cancer.
\newblock \emph{Cell death \& disease}, 12\penalty0 (2):\penalty0 183, 2021.

\bibitem[Zhang \& Schneider(2010)Zhang and Schneider]{zhang2010learning}
Zhang, Y. and Schneider, J.
\newblock Learning multiple tasks with a sparse matrix-normal penalty.
\newblock \emph{Advances in Neural Information Processing Systems}, 23, 2010.

\bibitem[Zhao et~al.(2020)Zhao, Stretcu, Smola, and Gordon]{zhao2020efficient}
Zhao, H., Stretcu, O., Smola, A.~J., and Gordon, G.~J.
\newblock Efficient multitask feature and relationship learning.
\newblock In \emph{Uncertainty in Artificial Intelligence}, pp.\  777--787. PMLR, 2020.

\bibitem[Zhou et~al.(2021)Zhou, Zhang, and Wang]{zhou2021efficient}
Zhou, W., Zhang, H.-T., and Wang, J.
\newblock An efficient sparse bayesian learning algorithm based on gaussian-scale mixtures.
\newblock \emph{IEEE Transactions on Neural Networks and Learning Systems}, 33\penalty0 (7):\penalty0 3065--3078, 2021.

\end{thebibliography}
\bibliographystyle{icml2025}

\newpage
\appendix
\onecolumn
\section{Matrix Computation}
\subsection{Trace and determinant}

\begin{equation}
\begin{aligned}
     \textbf{Tr}(ABC) &= \textbf{Tr}(CAB)= \textbf{Tr}(BCA), \\
    \left|{I}_N+{A} {B}^{\top}\right| &=\left|{I}_M+{A}^{\top} {B}\right|, \quad \quad \forall A,B \in \mathbb R^{N \times M}.
\end{aligned}
\end{equation}

\subsection{WoodBury identity}
\begin{equation}
    (A+BD^{-1}C)^{-1} = A^{-1}-A^{-1}B(D+CA^{-1}B)^{-1}CA^{-1}.
\end{equation}
\subsection{Derivatives of matrix-variate function}
\begin{equation}
\begin{aligned}
      \frac{\partial \textbf{Tr}(X^{\top}BXC)}{\partial X} &= BXC+B^{\top}XC^{\top}, \\
       \frac{\partial \textbf{Tr}(AXB)}{\partial X} &= A^{\top}B^{\top}, \\
        \frac{\partial \ln |X|}{\partial X} &= (X^{-1})^{\top}. \\
\end{aligned}
\end{equation}

\subsection{Contribution of $\alpha_i$}
Let $C = I+ X^{\top}K^{-1}X ={C}_{\backslash i} + \alpha_i^{-1} {\varphi}_i {\varphi}_i^{\top}  $, then we have
\begin{equation}
    \begin{aligned}
        |{C}| & =|{C}_{\backslash i}||1+\alpha_i^{-1} {\varphi}_i^{\top} {C}_{\backslash i}^{-1} {\varphi}_i|, \\
{C}^{-1} & ={C}_{\backslash i}^{-1}-\frac{{C}_{\backslash i}^{-1} {\varphi}_i {\varphi}_i^{\top} {C}_{\backslash i}^{-1}}{\alpha_i+{\varphi}_i^{\top} {C}_{\backslash i}^{-1} {\varphi}_i}.
    \end{aligned}
\end{equation}

\section{Derivations in Detail}
\subsection{Details of Eq.~\eqref{eq:joint}}
$\ln p(Y,W|X,V,K)$ is the joint distribution of $Y$ and $W$. By multiplying the likelihood function with the conjugate prior and omitting the constant, we get the kernel of $\ln p(Y,W|X,V,K)$.

Recall that the distribution of $Y$ given $X$ under the model is
\begin{equation}
\begin{aligned}
p({Y}|{X};W,\epsilon) & =\prod_i^{N} p\left({y}_i | {x}_i, {W}, {V}\right) \\
& =\frac{1}{|2 \pi {V}|^{N / 2}} \exp \left(-\frac{1}{2} \sum_i\left({y}_i-{W} {x}_i\right)^{\top} {V}^{-1}\left({y_i}-{W} {x_i}\right)\right) \\
& =\frac{1}{|2 \pi {V}|^{N / 2}} \exp \left(-\frac{1}{2} \textbf{Tr}\left({V}^{-1}({Y}-{W X})({Y}-{W X})^{\top}\right)\right) \\
& =\frac{1}{|2 \pi {V}|^{N / 2}} \exp \left(-\frac{1}{2} \textbf{Tr}\left({V}^{-1}\left[{W} {X} {X}^{\top} {W}^{\top}-2 {Y} {X}^{\top} {W}^{\top}+{Y} {Y}^{\top}\right]\right)\right).
\end{aligned}
\end{equation}

and the conjugate prior of $W\sim \mathcal{MN}(0, V_{m\times m}, K^{-1}_{d\times d})$ is
\begin{equation}
\begin{aligned}
p(W;V,K^{-1})&=\frac{|K|^{m/2}}{(2\pi)^{md/2} |V|^{d/2}} \exp \left[-\frac{1}{2} \textbf{Tr} (V^{-1} W K W^{\top})\right].
\end{aligned}
\end{equation}

Define
$$
\begin{aligned}
S_{xx} = XX^{\top} + K, 
S_{yy} = YY^{\top}, 
S_{yx} = YX^{\top}, 
S_{y | x} = S_{yy} - S_{yx} S_{xx}^{-1} S_{yx}^{\top}.
\end{aligned}
$$
By multiplying the likelihood function $p(Y|W,X)$ with the conjugate prior of $W$, i.e., $p(W|V,K)$, and note that
\begin{align}
\left(W - S_{yx}S_{xx}^{-1}\right) S_{xx} \left(W - S_{yx}S_{xx}^{-1}\right)^{\top} = WS_{xx}W^{\top} - 2S_{yx}W^{\top} + S_{yx}S_{xx}^{-1}S_{yx}^{\top}.
\end{align}
We have
\begin{align}
\ln p(Y,W | X,V,K)
&\propto  \textbf{Tr}\left[V^{-1} \left(WS_{xx}W^\top - 2S_{yx}W^\top + S_{yy}\right)\right] \\
&= \textbf{Tr}\left[V^{-1} \left(WS_{xx}W^\top - 2S_{yx}W^\top + S_{yx}S_{xx}^{-1}S_{yx} + S_{y | x}\right)\right] \\
&= \textbf{Tr}\Big[V^{-1} \left(W - S_{yx}S_{xx}^{-1}\right) S_{xx} \left(W - S_{yx}S_{xx}^{-1}\right)^{\top} + V^{-1} S_{y | x} \Big].
\end{align}
\subsection{Details of Eq.~\eqref{eq:dist_w_y}}
In Bayesian analysis with conjugate priors, the normalization constants are often omitted during intermediate steps, and only the kernel of the distribution is considered. Afterward, if the form of the kernel matches a known distribution, the normalization factor can be reintroduced. In our case, since the matrix normal distribution is conjugate to the Gaussian likelihood, the posterior distribution of $W$ also follows a matrix normal distribution. Therefore, we only need to match the kernel of the posterior distribution with the known form of the matrix normal distribution to derive the posterior.

As a result, the first term in Eq.~\eqref{eq:joint} can be decomposed into two parts: the first part corresponds to the kernel of the matrix normal distribution for $W$, and the second part corresponds to the kernel for $Y$. 

Formally, we have 
$
    p(W|X,Y,V,K) \sim \mathcal{MN}\left(\mu, V, \Sigma\right),
$
where $\mu=S_{yx}S_{xx} ^{-1}$ and $\Sigma=S_{xx}$ are the posterior mean and column covariance of $W$, respectively. $C= I+X^{\top}K^{-1}X$ is the column covariance of $Y$.

For $Y$,  $S_{y | x} = S_{yy} - S_{yx} S_{xx}^{-1} S_{yx}^{\top}= Y(I - X^{\top}S_{xx}^{-1}X)Y^T=YC^{-1}Y^{\top}$, 
where the third equality follows from WoodBury identity. 
Thus, we have $p(Y|X,V,K) \sim \mathcal{MN}(0, V, C)$.

\subsection{Details of Eq.~\eqref{formula:ln_p}} 
\begin{align}
    P(Y| X, V, K) \sim \mathcal{MN}(0, V, C) = \frac{1}{(2\pi)^{mN/2} |C|^{m/2}|V|^{N/2}}\exp \left[-\frac{1}{2} \textbf{Tr} (C^{-1} Y^{\top} V^{-1} Y)\right]
\end{align}
Hence,
\begin{align}
    \mathcal L &= -\ln P(Y | X, V, K) 
    = \frac{1}{2}\left[mN \ln 2\pi + m \ln |C| + N \ln |V| \right] + \frac{1}{2} \textbf{Tr}(C^{-1} Y^{\top} V^{-1} Y) \\
    &\propto  m \ln |C| + N \ln |V| + \textbf{Tr}(Y^{\top} V^{-1} Y C^{-1})
\end{align}
\subsection{Details of Eq.~\eqref{eq:reformulate}}

The term $Y C^{-1}Y^{\top}$ can be re-expressed by introducing the latent variable $W$ as follows:
\begin{align}
        YC^{-1}Y^{\top} &= Y(I-X^{\top}S_{xx}^{-1}X)Y^{\top} \\
        &=YY^{\top} -YX^{\top}S_{xx}^{-1}XY^{\top} \\
        &=YY^{\top} -\mu XY^{\top} \\
        &= (Y-\mu X)(Y-\mu X)^{\top} +YX^{\top}\mu ^{\top}-\mu XX^{\top}\mu^{\top} \\
        &=(Y-\mu X)(Y- \mu X)^{\top} +\mu S_{xx}\mu^{\top}-\mu XX^{\top}\mu^{\top} \\
        &=(Y-\mu X)(Y-\mu X)^{\top} +\mu (S_{xx}-XX^{\top})\mu^{\top} \\
         &= (Y-\mu X)(Y-\mu X)^{\top}+\mu K \mu^{\top} \\
         &= \min_{W} (Y-W X)(Y-W X)^{\top}+W K W^{\top}, 
 \end{align}
where $ \mu = Y X^{\top} S_{xx}^{-1} $. The last equality represents the variational form. The value of $W$ that minimizes the expression is $ W = \mu $, which can be derived by taking the derivative of the objective function $ (Y - W X)(Y - W X)^{\top} + W K W^{\top} $ with respect to $ W $ and setting it equal to zero.

\subsection{Details of Eq.~\eqref{eq:ma_V}}

Under flat prior, it can be shown that 
\begin{align}
    \frac{\partial \mathcal L}{\partial V^{-1}} =-NV+YC^{-1}Y^{\top}= -NV + (Y - \mu X)(Y - \mu X)^{\top} + \mu K \mu^{\top}.
\end{align}
Hence,
\begin{equation}
    V^{\text{new}} = \frac{(Y - \mu X)(Y - \mu X)^{\top} + \mu K \mu^{\top}}{N}.
\end{equation}
\subsection{Details of Eq.~\eqref{eq:C_sub_a}}
Eq.~\eqref{eq:si_qi}, Eq.~\eqref{eq:SI_QI} and Eq.~\eqref{eq:C_sub_a} is the standard quantity in sparse learning. To some extent, $s_i$ is called the \textbf{sparsity} and $q_i$ is known as the \textbf{quality} of $\phi$, The \textbf{sparsity} measures the extent to which basis function overlaps with the other basis vectors in the model, and the \textbf{quality} represents a measure of the alignment of the basis vector with the error between the training set values and the vector of predictions that would result from the model with the vector excluded.

Since features are sequentially added to the model in Sequential NARD, we only need to consider the features already presented in the model at each step. So in this case, we define $\mathcal A$ as the corresponding active subset of features. Therefore, we add a subscript $\mathcal A$ to the quantities in the analysis of this algorithm. 

In Eq.~\eqref{eq:SI_QI}, we have defined
$$
Q_i =YC_{\mathcal A}^{-1}\varphi_i, \quad 
S_i ={\varphi}_i^{\top} {C_{\mathcal A}^{-1}} {\varphi}_i.
$$
We have also defined $q_i =YC_{\backslash i}^{-1}\varphi_i$ and $s_i = {\varphi}_i^{\top} {C}_{\backslash i}^{-1} {\varphi}_i$. However, these two quantities involves $C_{\backslash i}^{-1}$ which leads to the high computational cost.\\

In the proof of Eq.~\eqref{eq:si_qi} below. We omit the subscript $\mathcal A$ presented in $C_{\mathcal A}$ for readability.

\begin{align}
    S_i &= {\varphi}_i^{\top} \left( {C}_{\backslash i}^{-1}-\frac{{C}_{\backslash i}^{-1} {\varphi}_i {\varphi}_i^{\top} {C}_{\backslash i}^{-1}}{\alpha_i+{\varphi}_i^{\top} {C}_{\backslash i}^{-1} {\varphi}_i}\right) {\varphi}_i\\
    &={\varphi}_i^{\top} {C}_{\backslash i}^{-1} {\varphi}_i - \frac{({\varphi}_i^{\top} {C}_{\backslash i}^{-1} {\varphi}_i)^2}{\alpha_i + s_i}\\
    &=s_i-\frac{s_{i}^2}{\alpha_i + s_i}\\
    &= \frac{\alpha_i s_i}{\alpha_i + s_i}
\end{align}

By rearranging it, we can easily get $s_i = \frac{\alpha_i S_i}{\alpha_i - S_i}$.

\begin{align}
    Q_i &=YC^{-1}\varphi_i\\
    &=Y \left( {C}_{\backslash i}^{-1}-\frac{{C}_{\backslash i}^{-1} {\varphi}_i {\varphi}_i^{\top} {C}_{\backslash i}^{-1}}{\alpha_i+{\varphi}_i^{\top} {C}_{\backslash i}^{-1} {\varphi}_i}\right) {\varphi}_i \\
    &=YC_{\backslash i}^{-1}\varphi_i - \frac{(YC_{\backslash i}^{-1}\varphi_i)({\varphi_i}^{\top}C_{\backslash i}^{-1}\varphi_i)}{\alpha_i + s_i}\\
    &= q_i - \frac{q_i s_i}{\alpha_i + s_i}.
\end{align}

By rearranging it, we get

\begin{align}
    q_i &= \frac{(\alpha_i + s_i)Q_i}{\alpha_i}\\
    &= \frac{\left(\alpha_i + \frac{\alpha_i S_i}{\alpha_i - S_i}\right)Q_i}{\alpha_i}\\
    &= \frac{\alpha_i Q_i}{\alpha_i - S_i}.
\end{align}

\section{Details of Sequential NARD}
\subsection{Proof of Theorem 3.1}\label{appendix:proof}
\begin{theorem}
    Denote $\eta_i := \textbf{Tr}(q_i q_i^{\top}V^{-1}) - m s_i$, then the global maximum of $L(\alpha)$ with respect to $\alpha_i$ is
    \begin{equation}
    \alpha_i =
    \begin{cases} 
        \frac{m s_i^2}{\eta_i} \quad \, \,  \,, &  \ \eta_i > 0 \\
         \infty \quad, & \ \eta_i \le 0
    \end{cases}
    \end{equation}
\end{theorem}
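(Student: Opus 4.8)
The plan is to treat $\mathcal{L}(\alpha)$ as a one-dimensional function of $\alpha_i$ with all other hyperparameters fixed, so that from the decomposition already stated,
\begin{equation}
\mathcal{L}(\alpha_i) = m\big[\ln\alpha_i - \ln(\alpha_i + s_i)\big] + \frac{\textbf{Tr}(q_i q_i^{\top} V^{-1})}{\alpha_i + s_i} + \mathcal{L}(\alpha_{\backslash i}),
\end{equation}
and the last term is a constant. Write $\tau := \textbf{Tr}(q_i q_i^{\top} V^{-1})$, which is nonnegative since $q_i q_i^{\top}\succeq 0$ and $V^{-1}\succ 0$. First I would restrict attention to the admissible domain $\alpha_i \in (0,\infty]$, noting that $\alpha_i = \infty$ corresponds to dropping the feature and must be handled as a boundary/limit case. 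On $(0,\infty)$, I would differentiate with respect to $\alpha_i$ and simplify the resulting expression over the common denominator $\alpha_i(\alpha_i + s_i)^2$.

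The key computation: $\frac{d\mathcal{L}}{d\alpha_i} = \frac{m}{\alpha_i} - \frac{m}{\alpha_i + s_i} - \frac{\tau}{(\alpha_i + s_i)^2}$. Putting over a common denominator, the numerator becomes $m s_i(\alpha_i + s_i) - \tau\alpha_i = (m s_i - \tau)\alpha_i + m s_i^2 = m s_i^2 - \eta_i \alpha_i$, using $\eta_i = \tau - m s_i$. Hence the unique stationary point on $(0,\infty)$ is $\alpha_i^\ast = m s_i^2/\eta_i$, which lies in $(0,\infty)$ precisely when $\eta_i > 0$ (recall $s_i > 0$ since it is a quadratic form in the positive definite $C_{\backslash i}^{-1}$). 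I would then check the sign of the numerator $m s_i^2 - \eta_i\alpha_i$: when $\eta_i > 0$ it is positive for $\alpha_i < \alpha_i^\ast$ and negative for $\alpha_i > \alpha_i^\ast$, so $\alpha_i^\ast$ is a global maximum on the interval; when $\eta_i \le 0$ the numerator is strictly positive for all $\alpha_i > 0$, so $\mathcal{L}$ is strictly increasing in $\alpha_i$, and the supremum is attained only in the limit $\alpha_i \to \infty$, i.e. at the boundary point $\alpha_i = \infty$. This case split yields exactly the claimed formula \eqref{update_ai}.

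For rigor I would add two short remarks: (i) verify the limit $\lim_{\alpha_i\to\infty}\mathcal{L}(\alpha_i) = \mathcal{L}(\alpha_{\backslash i})$ is finite, since $\ln\alpha_i - \ln(\alpha_i+s_i) \to 0$ and $\tau/(\alpha_i+s_i)\to 0$, which makes the $\eta_i \le 0$ conclusion meaningful (removing the feature is optimal); and (ii) note that since there is at most one stationary point and the function is smooth on $(0,\infty)$ with a well-understood boundary behavior at both ends ($\mathcal{L}\to-\infty$ as $\alpha_i\to 0^+$ because of the $m\ln\alpha_i$ term), the local analysis is in fact global. The only mild subtlety — really the one place to be careful — is confirming that $s_i > 0$ strictly (so the stationary point and the denominators are well-defined) and that $\tau \ge 0$; both follow from positive definiteness of $C_{\backslash i}^{-1}$ and $V^{-1}$, together with $\varphi_i \ne 0$ (a feature that is identically zero can be excluded a priori). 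No serious obstacle is expected; the argument is a careful one-variable optimization with a boundary case.
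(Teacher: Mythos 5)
Your proposal is correct and follows essentially the same route as the paper: differentiate the single-variable decomposition in $\alpha_i$, identify the unique stationary point $m s_i^2/\eta_i$, and split on the sign of $\eta_i$, with $\alpha_i\to\infty$ handled by monotonicity when $\eta_i\le 0$. The only (minor) difference is that you establish global maximality via the sign change of the first derivative, whereas the paper verifies a negative second derivative at the stationary point; your sign analysis is, if anything, slightly cleaner for the global claim.
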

\begin{proof}
The stationary points of the marginal likelihood with respect to $\alpha_i$ occur when
     $$\frac{\partial \mathcal{L}(\alpha)}{\partial \alpha_i} = \frac{1}{2} \left[ \frac{m}{\alpha_i} - \frac{m}{\alpha_i + s_i} - \frac{\textbf{Tr}(q_i q_i^{\top}V^{-1})}{(\alpha_i + s_i)^2} \right]=0.$$
    Then we have $\alpha^\ast_i = \frac{m s_i^2}{\eta_i}$. When $\eta_i > 0$, $\alpha^\ast_i > 0$, then this is a reasonable $\alpha_i$ value. 
    \begin{equation}
    \begin{aligned}
        \frac{\partial^2 L}{\partial \alpha_i^2} &= \frac{1}{2} \left[ -\frac{m}{\alpha_i^2} + \frac{m}{(\alpha_i + s_i)^2} + \frac{2\textbf{Tr}(q_i q_i^{\top}V^{-1})}{(\alpha_i + s_i)^3} \right] \\
        &= \frac{1}{2} \left[ \frac{2\alpha_i^2 \eta_i - 3m\alpha_i s_i^2 - m s_i^3}{\alpha_i^2 (\alpha_i + s_i)^3} \right].
    \end{aligned}
    \end{equation}
    Case 1. $\eta_i > 0$ :\\
    When $\alpha_i = \alpha^\ast_i$, the nominator is $2 \frac{m^2 s_i^4}{\eta_i} -3 \frac{m^2 s_i^4}{\eta_i} - ms_i^3 = - \frac{m^2 s_i^4}{\eta_i} - ms_i^3$. So, it is negative. \\Then, $\eta_i > \frac{-m^2s_i^4}{ms_i^3} = -ms_i$ (already satisfied since here $\eta_i > 0$). Also, denominator is positive. \\
    Hence, $\frac{\partial^2 L}{\partial \alpha_i^2}(\alpha^\ast_i) < 0$. Therefore, $\alpha^\ast_i$ is the maximum point.\\
    \\
    Case 2. $\eta_i \leq 0$ :\\
    Then $\alpha^\ast_i \notin \text{Dom}(L)$ since we need $\alpha_i > 0$ for all i, so $\alpha^\ast_i$ can not be the optimal point here.\\
    Note that 
    \[
    \frac{\partial L}{\partial \alpha_i} = \frac{ms_i^2 + a\alpha_is_i - \alpha_i\textbf{Tr}(q_i q_i^{\top}V^{-1})}{(\alpha_i + s_i)^3} = \frac{ms_i^2 - \alpha_i\eta_i}{\alpha_i^2 (\alpha_i + s_i)^3}.
    \]
    We can observe that the nominator and the denominator is always positive. So $L$ is increasing with respect to $\alpha_i$. Hence, $\alpha_i \to \infty$ will make $L$ as large as possible.
\end{proof}

Recall the Theorem 3.1, $s_i$ is called the sparsity and $q_i$ is known as the quality of $\varphi_i$, The sparsity measures the extent to which basis function overlaps with the other basis vectors in the model, and the quality represents a measure of the alignment of the basis vector with the error between the training set values and the vector of predictions that would result from the model with the vector excluded. The term $\eta_i = \text{Tr}(q_i q_i^{\top}V^{-1}) - m s_i$ actually measures the trade-off between the alignment quality of the basis vector and its sparsity in relation to the covariance structure. For $L(\alpha_i)$, when $\eta_i > 0$, the function exhibits an initial increase followed by a decrease, with the maximum value occurring at a stationary point. When $\eta_i \le 0$, the process is monotonically increasing, and the maximum value is asymptotically approached as $\alpha_i \rightarrow \infty$, consistent with the proof of Theorem 3.1. Furthermore, as $\alpha_i \to \infty$, the part of $L(\alpha_i)$ that depends on $\alpha_i$ diminishes, and $L(\alpha_i) = 0$ represents the situation where the corresponding feature can be pruned from the model. 

\subsection{Analysis of prior}\label{appendix:analysis_of_prior}
\textbf{Observation.} When $\alpha_i \to \infty$, $W_i\to \vec{0}$. $\textbf{Tr}({KW^{\top}V^{-1}W}) \geq 0$.
\begin{proof}
$K$ and $V$ are positive definite, so $K^{-1}$ and $V^{-1}$ are also positive definite. For all $x \in \mathbb{R}^m$, $x^{\top}W^{\top}V^{-1}Wx \geq 0$. Hence, $W^{\top}V^{-1}W$ is positive semidefinite. \\
Note, the trace of the product of two positive semidefinite matrices is nonnegative. So $\textbf{Tr}({KW^{\top}V^{-1}W}) \geq 0$.\\
\end{proof}

Prior $$P(W|V, K) = \underbrace{\frac{|K|^{m/2}}{(2\pi)^{md/2} |V|^{d/2}}}_{A} \exp \left[-\frac{1}{2} \textbf{Tr} (V^{-1}WKW^{\top})\right].$$
When $\alpha_i \to \infty$ and fix $\alpha_{\backslash i}$, $|K| \to \infty$ then $A \to \infty$.
\\
\\
So, \[
\textbf{Tr}(KW^{\top}V^{-1}W) \to
\begin{cases}
    \infty &  , \,W_i \neq \vec{0} \\
    const &  , \,W_i \to \vec{0}
\end{cases}
\]
Hence, 
\[
\exp\left[-\frac{1}{2}\textbf{Tr}(KW^{\top}V^{-1}W)\right] \to
\begin{cases}
    0, & \text{if } W_i \neq \vec{0} \\
    \text{const}, & \text{if } W_i \to \vec{0}
\end{cases}
\]

For a matrix $A \in \mathbb R^{a \times b} $ following a matrix normal distribution $\mathcal{MN}(M, U, Q)$, the density is 
\begin{equation}
    \frac{|U|^{-d/2}|Q|^{-m/2}}{(2\pi)^{md/2}} \exp\left\{-\frac{1}{2} \textbf{Tr} \left[Q^{-1}(A-M)U^{-1}(A-M)^{\top}\right]\right\},
\end{equation}
where $M \in \mathbb R^{a \times b}$ is the expectation, $U \in \mathbb R^{a \times a}$ and $Q \in \mathbb R^{b\times b}$ are two positive definite matrices representing the covariance matrices for rows and columns of $A$ respectively.
\section{Hyperprior} \label{appendix:hyperprior}
\subsection{Hyperprior for $V$}
\textbf{Flat prior.} It can be shown that 
\begin{equation}
    \frac{\partial \mathcal L}{\partial V^{-1}} = -NV + (Y - \mu X)(Y - \mu X)^{\top} + \mu K \mu^{\top}.
\end{equation}
Hence,
\begin{equation}
    V^{\text{new}} = \frac{(Y - \mu X)(Y - \mu X)^{\top} + \mu K \mu^{\top}}{N}.
\end{equation}\\
\textbf{Inverse Wishart prior.} If $V$ follows a inverse Wishart distribution:

\begin{equation}
\begin{aligned}
p(V) &\sim \mathcal{W}^{-1} (\Psi, \nu) \\
&\propto |\Psi|^{\frac{\nu}{2}} |V|^{-\frac{(\nu+m+1)}{2}} \exp \left({-\frac{1}{2} \textbf{Tr}(V^{-1}\Psi)}\right)
\end{aligned}
\end{equation}

The posterior for $V$ is still inverse Wishart:
\begin{equation}
p(V|X,Y) \sim \mathcal{W}^{-1}(S_{y|x}+\Psi, N+\nu),
\end{equation}
and $Y$ is matrix-$\mathcal{T}$ distribution.
\begin{equation}
    Y \sim \mathcal{T} (0, \Psi, C^{-1}, N+\nu).
\end{equation}
Hence, 
\begin{equation}
    V^{\text{new}} = \frac{(Y - \mu X)(Y - \mu X)^{\top} + \mu K \mu^{\top} + \Psi}{N+\nu}.
\end{equation}
\subsection{Hyperprior for $\alpha$}
Here we consider different prior for $\alpha$.

\textbf{Flat prior.} If $\alpha$ follows the flat prior, i.e., $p(\alpha)=1$, then
\begin{equation}
    \frac{\partial \mathcal L}{\partial \ln (\alpha_i)} = m\left[\alpha_i\left(S_{xx}^{-1}\right)_{ii} - 1\right] + \alpha_i\left(\mu^{\top} V^{-1} \mu\right)_{ii}.
\end{equation}

we have
\begin{equation}
    \alpha_i^{\text{new}} = \frac{m}{m(S_{xx}^{-1})_{ii} + (\mu ^\top V^{-1} \mu)_{ii}}.
\end{equation}

\textbf{Gamma prior.} If $\alpha$ follows a Gamma prior $\text{Gamma}(a, b)$: 
\begin{equation}
    p(\alpha) = \prod_{i=1}^{d} \frac{b^{a}}{\Gamma(a)} \alpha_i^{a- 1} e^{-b \alpha_i}.
\end{equation}
where $a$ and $b$ are the shape and rate parameters of the Gamma distribution, respectively.
Then
\begin{equation}
    \frac{\partial \mathcal L}{\partial \ln (\alpha_i)} = \frac{m}{2}\left[\alpha_i\left(S_{xx}^{-1}\right)_{ii} - 1\right] + \frac{1}{2}\alpha_i\left(\mu^{\top} V^{-1} \mu\right)_{ii} + a-1-b \alpha_i,
\end{equation}
therefore,
\begin{equation}
    \alpha_i^{\text{new}} = \frac{m - 2a+2}{m(S_{xx}^{-1})_{ii} + (\mu ^\top V^{-1} \mu)_{ii} - 2b}.
\end{equation}

\section{Relation with Other Work}
We noticed that the model in the paper "An Iterative Min-Min Optimization Method for Sparse Bayesian Learning" \cite{wang2024iterative_min_min} is designed for univariate regression, where the model is expressed as $y = Xw + \epsilon$, with $y, \epsilon \in \mathbb{R}^n$, $X \in \mathbb{R}^{n \times d}$, and $w \in \mathbb{R}^d$. In contrast, our approach focuses on a multivariate regression model. Therefore, direct comparison between the two models is not entirely appropriate unless we restrict the analysis to datasets with a single outcome variable.

To extend the Min-Min SBL methods from that paper to the multivariate case, several modifications are necessary. For example, we need to adjust the prior distribution for the regression coefficient matrix. Additionally, to ensure that the negative logarithm of the marginal likelihood function can be decomposed as shown in Eq 12 of Min-Min SBL, and to preserve the desirable properties of the Concave-Convex Procedure (CCCP) as discussed in Lemmas 2.1 and 2.2, we may need to introduce specific constraints into the model.

While extending the model may seem intuitively straightforward, the actual derivations are not trivial and require additional effort. For instance, when we introduce the covariance matrix $V$, all iterative formulas must be re-derived within the CCP framework. ARD, in this sense, is more of a framework, where the solution process includes the classical Mackay update, the EM update, and methods like Min-Min SBL. In our paper, we focus on improving the algorithmic complexity of the Mackay update step, while the extension offered by Min-Min SBL to ARD are orthogonal to the scope of our current work.

\section{Additional Experiments}

\subsection{Extension to the nonlinear setting}
\begin{table}[htbp]
\caption{Associations under different algorithms with kernels.}
\label{tab:pheno_kernel}
\vskip 0.1in
\begin{center}
\begin{small}
\begin{sc}
\begin{tabular}{lccccccr}
  \toprule
  Method & MRCE & CAPME & HS-GHS & JRNS & NARD & NARD(Polynomial) & NARD(RBF)\\  
  \midrule
  \# of association & 15330 & 15094 & 14983 & 15066 & 15101 & 15094 & 15072 \\  
  Jaccard index & 0.979  & 0.977 &- & 0.988  & 0.988  & 0.990  & 0.989 \\
  \bottomrule
\end{tabular}
\end{sc}
\end{small}
\end{center}
\vskip -0.1in
\end{table}

Our method can be naturally extended to address nonlinearity through kernel method. Specifically, we consider the model  
$$
Y = W\Phi(X) + \mathcal{E}, \quad \Phi(\cdot) \in { \text{Polynomial, RBF, ...} }
$$
where $\Phi(X)$ represents a nonlinear feature mapping that transforms the input space into a higher-dimensional space, allowing for more flexible modeling of complex relationships.

To explore this extension, we consider 2 different kernel functions: the polynomial kernel and the Gaussian (RBF) kernel. We evaluate the performance on a real-world aging phenotype data.

As shown in Table \ref{tab:pheno_kernel}, our approach with polynomial and RBF kernels demonstrates competitive performance, achieving high Jaccard index values. The results are consistent with our expectation that kernel-based extensions allow the model to capture more complex, nonlinear relationships in the data, further validating the robustness and flexibility of our method. 

\subsection{Dataset diversity}\label{exp:data_diversity}
We also expanded our experiments to include 2 non-biological datasets: Kaggle’s air quality dataset\footnote{\url{https://archive.ics.uci.edu/dataset/501/beijing+multi+site+air+quality+data}} and A-shares stock dataset.

For the air quality dataset, we performed data imputation and timestamp alignment, then analyzed the relationships among 11 key indicators. The results show a strong correlation between PM2.5 and humidity, supporting the environmental principle that higher humidity promotes the adhesion of fine particles, leading to increased PM2.5 levels. This aligns with previous studies on atmospheric dynamics.

For the A-shares dataset, we collect nearly 7 years of daily trading data and use the previous 5 days' information to predict the next day's opening price. This results in a dataset of 3032 stocks. 

As shown in Table \ref{tab:ashares}, our experiments show that Bayesian methods, such as HS-GHS and JRNS, were unable to complete calculations in 4 days, while our approach demonstrates excellent scalability. Analysis of the precision matrix reveals significant block structures, indicating that stocks from the same sector or industry tend to show similar trends in price movement. Through these 2 experimental datasets, we have demonstrated the effectiveness of our method in both environmental and financial domains. Since there is no ground truth, we used MRCE as a baseline algorithm for comparison. We reported the Jaccard index as a benchmark. Additionally, we presented the computational time to highlight the computational advantages of our method.

\begin{table}[h]
\caption{Associations of A-shares stocks.($m=3032, d=60640,N=1696$)}
\label{tab:ashares}
\vskip 0.1in
\begin{center}
\begin{scriptsize}
\begin{sc}
\begin{tabular}{lcccccccr}
  \toprule
  Method & MRCE & CAPME & HS-GHS & JRNS & NARD & Sequential NARD & Surrogate NARD & Hybrid NARD\\  
  \midrule
  \# of association & 97939 & 98335 & - & - & 99671 & 100309 & 99105 & 99475\\  
  Jaccard index & -  & 0.869 &- & -  & 0.881  & 0.891  & 0.893 & 0.890\\
  Time per iteration (second) & 1200   & 1300 &- & -  & 1000  & -  & 255 & -\\
  Time all (h) & 17   & 16.5 &- & -  & 14.5  & 8  & 5.5 & 3\\
  \bottomrule
\end{tabular}
\end{sc}
\end{scriptsize}
\end{center}
\vskip -0.1in
\end{table}

\subsection{Pathway of TCGA cancer data}
Table \ref{tab:tcga_data_describe} shows the sample size $N$, the number of predictors $d$, and the number of response variables $m$ for the 7 datasets corresponding to each cancer type.
\begin{table}[h]
\caption{Datasets on seven different cancer types.}
\label{tab:tcga_data_describe}
\vskip 0.15in
\begin{center}
\begin{small}
\begin{sc}
\begin{tabular}{lccr}
  \toprule
cancer & $N$   & $d$  & $m$  \\
  \midrule
READ   & 121 & 73 & 76 \\
LUAD   & 356 & 73 & 76 \\
COAD   & 338 & 73 & 76 \\
LUSC   & 309 & 73 & 86 \\
OV     & 227 & 73 & 77 \\
SKCM   & 333 & 73 & 76 \\
UCEC   & 393 & 73 & 77 \\
\bottomrule
\end{tabular}
\end{sc}
\end{small}
\end{center}
\vskip -0.1in
\end{table}

Table \ref{tab:appendix_pathway_proteins} presents a list of all pathways considered in the analysis of the TCGA cancer data in this study, along with their respective protein members.
\begin{table}[h]
\caption{Pathways and protein names.}
\label{tab:appendix_pathway_proteins}
\vskip 0.15in
\begin{center}
\begin{tiny}
\begin{sc}
\setlength{\tabcolsep}{1.5pt}
\begin{tabular}{ll}
\toprule
Pathway\_Short  & Protein \\
\toprule
Apoptosis       & BAK, BAX, BID, BIM, CASPASE7CLEAVEDD198, BADPS112, BCL2, BCLXL, CIAP                                                \\
Breast-reactive & CAVEOLIN1, MYH11, RAB11, BETACATENIN, GAPDH, RBM15                                                                        \\
Cell-cycle      & CDK1, CYCLINB1, CYCLINE1, CYCLINE2, P27PT157, P27PT198, PCNA, FOXM1                                                   \\
Core-reactive   & CAVEOLIN1, BETACATENIN, RBM15, ECADHERIN, CLAUDIN7                                                                          \\
DNA damage response  & 53BP1, ATM, BRCA2, CHK1PS345, CHK2PT68, KU80, MRE11, P53, RAD50, RAD51, XRCC1                                   \\
EMT             & FIBRONECTIN, NCADHERIN, COLLAGENVI, CLAUDIN7, ECADHERIN, BetaCatenin, PAI1                                              \\
PI3K/AKT        & AKTPS473, AKTPT308, GSK3ALPHABETAPS21S9, GSK3PS9, P27PT157, P27PT198, PRAS40PT246, TUBERINPT1462, INPP4B, PTEN    \\
RAS/MAPK        & ARAFPS299, CJUNPS73, CRAFPS338, JNKPT183Y185, MAPKPT202Y204, MEK1PS217S221, P38PT180Y182, P90RSKPT359S363, YB1PS102 \\
RTK             & EGFRPY1068, EGFRPY1173, HER2PY1248, HER3PY1298, SHCPY317, SRCPY416, SRCPY527                                            \\
TSC/mTOR        & 4EBP1PS65, 4EBP1PT37T46, 4EBP1PT70, P70S6KPT389, MTORPS2448, S6PS235S236, S6PS240S244, RBPS807S811                   \\
\bottomrule
\end{tabular}
\end{sc}
\end{tiny}
\end{center}
\end{table}

\subsection{Protein network for different cancer}\label{appendix:network_cancers}
\begin{figure}[h]
\centering
\includegraphics[width=0.5\linewidth]{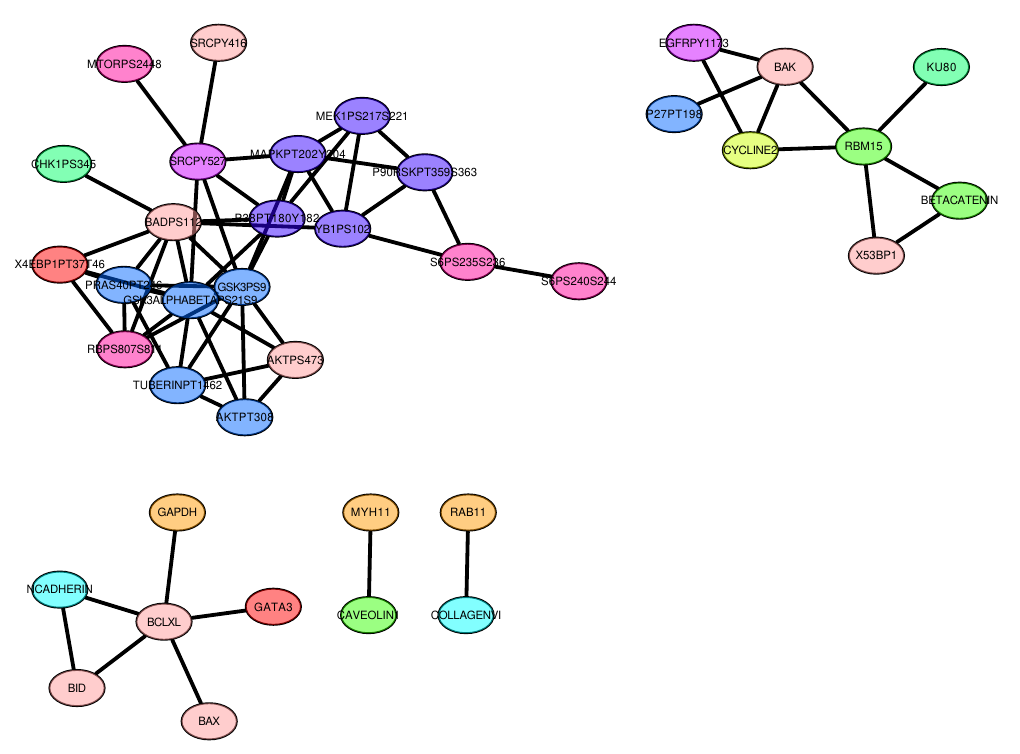}
\caption{Protein network of UCEC.}
\label{fig:protein_network_UCEC}
\end{figure} 

\begin{figure}[h]
\centering
\includegraphics[width=0.5\linewidth]{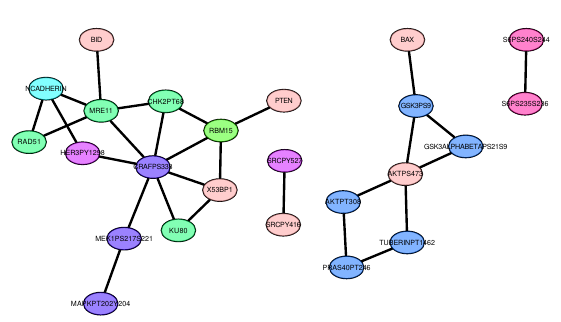}
\caption{Protein network of LUAD.}
\label{fig:protein_network_LUAD}
\end{figure}

\begin{figure}[h]
\centering
\includegraphics[width=0.5\linewidth]{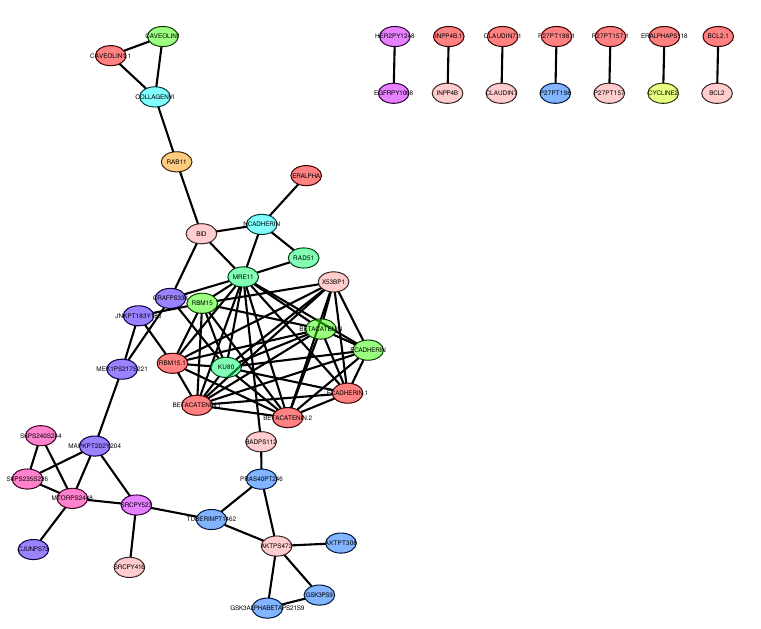}
\caption{Protein network of LUSC.}
\label{fig:protein_network_LUSC}
\end{figure}

\begin{figure}[h]
\centering
\includegraphics[width=0.5\linewidth]{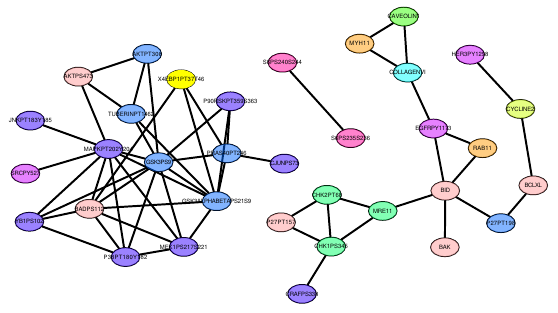}
\caption{Protein network of OV.}
\label{fig:protein_network_OV}
\end{figure}

\begin{figure}[h]
\centering
\includegraphics[width=0.5\linewidth]{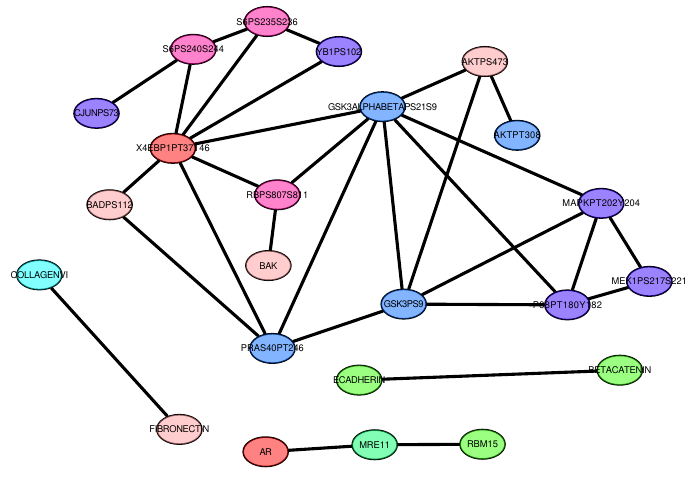}
\caption{Protein network of READ.}
\label{fig:protein_network_READ}
\end{figure}

\begin{figure}[h]
\centering
\includegraphics[width=0.5\linewidth]{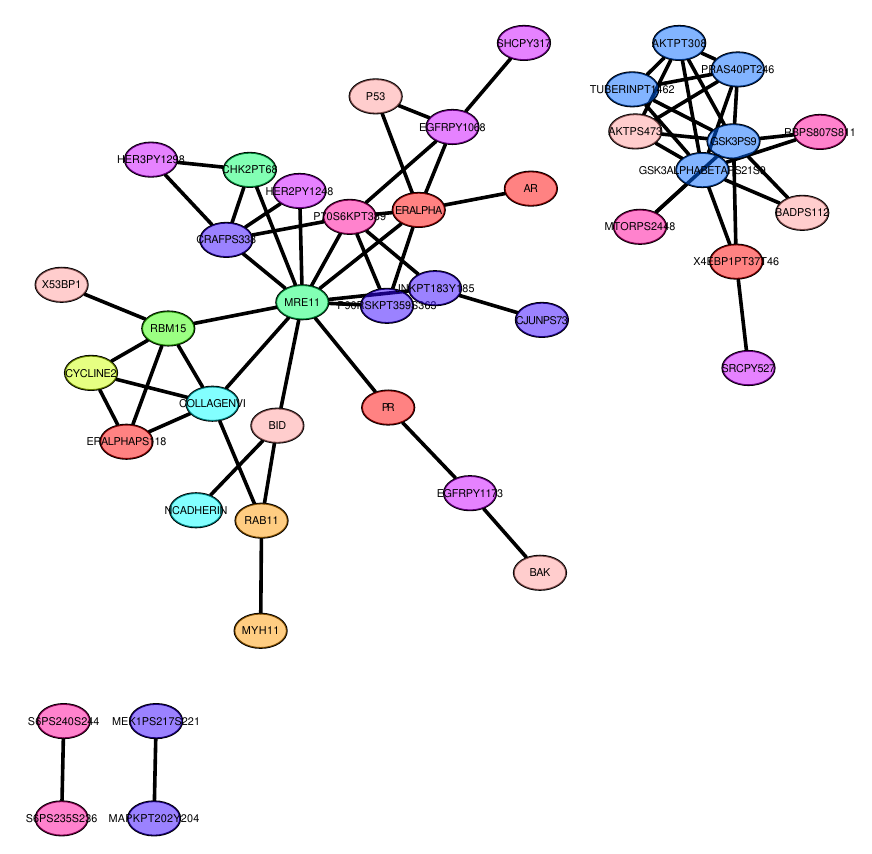}
\caption{Protein network of SKCM.}
\label{fig:protein_network_SKCM}
\end{figure}

Figure \ref{fig:protein_network_UCEC} and Figure \ref{fig:protein_network_LUAD} illustrate the protein networks for UCEC and LUAD, respectively. These networks highlight the interactions between key proteins associated with each cancer type. Figure \ref{fig:protein_network_LUSC} and Figure \ref{fig:protein_network_OV} show the networks for LUSC and OV, revealing distinct protein association patterns that may contribute to the unique characteristics of these cancers. Similarly, Figure \ref{fig:protein_network_READ} and Figure \ref{fig:protein_network_SKCM} present the protein networks for READ and SKCM, providing further insight into the molecular landscape of these cancer types.

\subsection{Further discussions}
The Surrogate NARD sometimes demonstrate instability in its computations. This is related to numerical issues that arise during the iterative optimization process. Specifically, when working with large, high-dimensional real-world datasets, the precision matrix estimation can become unstable due to the ill-conditioning of the covariance matrix or the challenges associated with ensuring that the precision matrix remains positive definite throughout the iterations. This numerical instability is not related to gradient-based methods but rather stems from the nature of the data and the underlying optimization procedure. A potential solutions is that we can use a more robust initialization for the precision matrix.


\end{document}